\newenvironment{proof}[1][Proof]{\paragraph{#1:}}{\hfill$\square$\vspace{4pt}}
\newtheorem{theorem}{Theorem}[section]
\newtheorem{proposition}[theorem]{Proposition}
\newtheorem{lemma}[theorem]{Lemma}
\newcommand{\wt}{\widetilde}
\newcommand{\frob}{{\rm Frob}}
\newcommand{\target}{\mathrm{target}}
\newcommand{\randomize}{{\texttt{Randomize}}}
\newcommand{\trainmask}{{\texttt{TrainMask}}}
\newcommand{\bernoulli}{{\mathrm{Bernoulli}}}
\newcommand{\calculatemask}{{\texttt{CalculateMask}}}
\newcommand{\param}{\mathrm{param}}
\newcommand{\score}{\mathrm{score}}
\newcommand{\fanin}{\mathrm{fanin}}
\newcommand{\SGD}{\mathrm{SGD}}
\newcommand{\period}{\mathrm{per}}
\newcommand{\labeled}{\mathrm{labeled}}
\newcommand{\vect}[1]{\ensuremath{\mathbf{#1}}}
\newcommand{\maxpool}{\mbox{{\rm max-pool}}}
\newcommand{\avgpool}{\mbox{{\rm avg-pool}}}
\newcommand{\eqtext}{\mbox{{\rm Eq.}}}
\title{Pruning Randomly Initialized Neural Networks \\ with Iterative Randomization}
\author{
\begin{tabular}{c}
  Daiki Chijiwa\textsuperscript{\dag}\thanks{Corresponding author: \texttt{daiki.chijiwa.mk@hco.ntt.co.jp}} 
  \quad\qquad Shin'ya Yamaguchi\textsuperscript{\dag}
  \quad\qquad Yasutoshi Ida\textsuperscript{\dag} \\ \\
   Kenji Umakoshi\textsuperscript{\ddag}
  \quad\qquad Tomohiro Inoue\textsuperscript{\ddag}
  \end{tabular} 
  \\
  \\
  \textsuperscript{\dag}NTT Computer and Data Science Laboratories, NTT Corporation \\
  \textsuperscript{\ddag}NTT Social Informatics Laboratories, NTT Corporation
}
\begin{document}

\maketitle
\vspace{-2mm}

\begin{abstract}

Pruning the weights of randomly initialized neural networks plays an important role in the context of lottery ticket hypothesis. Ramanujan et al. \cite{ramanujan2020what} empirically showed that only pruning the weights can achieve remarkable performance instead of optimizing the weight values. However, to achieve the same level of performance as the weight optimization, the pruning approach requires more parameters in the networks before pruning and thus more memory space. To overcome this parameter inefficiency, we introduce a novel framework to prune randomly initialized neural networks with iteratively randomizing weight values (IteRand). Theoretically, we prove an approximation theorem in our framework, which indicates that the randomizing operations are provably effective to reduce the required number of the parameters. We also empirically demonstrate the parameter efficiency in multiple experiments on CIFAR-10 and ImageNet. The code is available at \url{https://github.com/dchiji-ntt/iterand}.

\end{abstract}

\section{Introduction}\label{section:introduction}

The lottery ticket hypothesis, which was originally proposed by Frankle and Carbin \cite{frankle2018lottery}, has been an important topic in the research of deep neural networks (DNNs).
The hypothesis claims that an over-parameterized DNN has a sparse subnetwork (called a {\it winning ticket}) that can achieve almost the same accuracy as the fully-trained entire network when trained independently. 
If the hypothesis holds for a given network, then we can reduce the computational cost by using the sparse subnetwork instead of the entire network while maintaining the accuracy \cite{lee2018snip,morcos2019one}.
In addition to the practical benefit, the hypothesis also suggests that the over-parametrization of DNNs is no longer necessary and their subnetworks alone are sufficient to achieve full accuracy.

Ramanujan et al. \cite{ramanujan2020what} went one step further.
They proposed and empirically demonstrated a conjecture related to the above hypothesis, called the {\it strong lottery ticket hypothesis}, which informally states that there exists a subnetwork in a randomly initialized neural network such that it already achieves almost the same accuracy as a fully trained network, {\it without} any optimization of the weights of the network.
A remarkable consequence of this hypothesis is that neural networks could be trained by solving a discrete optimization problem.
That is, we may train a randomly initialized neural network by finding an optimal subnetwork (which we call {\it weight-pruning optimization}), instead of optimizing the network weights continuously (which we call {\it weight-optimization}) with stochastic gradient descent (SGD).

However, the weight-pruning optimization requires a problematic amount of parameters in the random network before pruning.
Pensia et al. \cite{pensia2020optimal} theoretically showed that the required network width for the weight-pruning optimization needs to be logarithmically wider than the weight-optimization at least in the case of shallow networks.
Therefore, the weight-pruning optimization requires more parameters, and thus more memory space, than the weight-optimization to achieve the same accuracy.
In other words, under a given memory constraint, the weight-pruning optimization can have lower final accuracy than the weight-optimization in practice.

In this paper, we propose a novel optimization method for neural networks called {\it weight-pruning with iterative randomization} (IteRand), which extends the weight-pruning optimization to overcome the parameter inefficiency.
The key idea is to virtually increase the network width by randomizing pruned weights at each iteration of the weight-pruning optimization, without any additional memory consumption.
Indeed, we theoretically show that the required network width can be reduced by the randomizing operations.
More precisely, our theoretical result indicates that, if the number of randomizing operations is large enough, we can reduce the required network width for weight-pruning to the same as that for a network fully trained by the weight-optimization up to constant factors, in contrast to the logarithmic factors of the previous results \cite{pensia2020optimal,orseau2020logarithmic}.
We also empirically demonstrate that, under a given amount of network parameters, IteRand boosts the accuracy of the weight-pruning optimization in multiple vision experiments.

\section{Background}\label{background section}
In this section, we review the prior works on pruning randomly initialized neural networks.

\vspace{-2mm}
\paragraph{Notation and setup.}
Let $d, N \in \mathbb{N}$.
Let $f(x; \bm{\theta})$ be an $l$-layered ReLU neural network with an input $x\in\mathbb{R}^d$ and parameters $\bm{\theta}=(\theta_i)_{1\leq i \leq n} \in \mathbb{R}^n$, where each weight $\theta_i$ is randomly sampled from a distribution $\mathcal{D}_{\param}$ over $\mathbb{R}$.
A subnetwork of $f(x;\bm{\theta})$ is written as $f(x; \bm{\theta}\odot \vect{m})$ where $\vect{m} \in \{0,1\}^n$ is a binary mask and "$\odot$" represents an element-wise multiplication.
\vspace{1mm}

Ramanujan et al. \cite{ramanujan2020what} empirically observed that we can train the randomly initialized neural network $f(x;\bm{\theta})$ by solving the following discrete optimization problem, which we call {\it weight-pruning optimization}:
\begin{equation}\label{discrete optimization problem}
    \min_{\vect{m}\in \{0,1\}^n} \underset{(x,y) \sim \mathcal{D}_{\labeled}}{\mathbb{E}}\Big[\mathcal{L}(f(x; \bm{\theta} \odot \vect{m}), y)\Big],
\end{equation}
where $\mathcal{D}_{\labeled}$ is a distribution on a set of labeled data $(x,y)$ and $\mathcal{L}$ is a loss function.
To solve this optimization problem, Ramanujan et al. \cite{ramanujan2020what} proposed an optimization algorithm, called {\it edge-popup} (Algorithm \ref{algorithm1}).

\vspace{-1mm}

    \begin{algorithm}[h]
    \SetAlgoNoLine
      Initialize $\bm{\theta} \sim \mathcal{D}^n_{\param}, \vect{s} \sim \mathcal{D}^n_{\score}$\tcp*{$\mathcal{D}_{\param}$ and $\mathcal{D}_{\score}$ are distributions over $\mathbb{R}$}
     \While{$k = 0,\cdots, N-1$}{
      Sample a labeled data $(x, y) \sim \mathcal{D}_{\labeled}$\;
      $\vect{m}, \vect{s} \gets \trainmask(\bm{\theta}, \vect{s}; (x, y))$\tcp*{Optimize importance scores $\vect{s}$ and update $\vect{m}$}
     }
     return $\vect{m}, \bm{\theta}$\;
     \caption{Weight-pruning optimization by edge-popup \cite{ramanujan2020what}}
     \label{algorithm1}
    \end{algorithm}

\vspace{-1mm}

The $\trainmask$ (Algorithm \ref{edge-popup algorithm}) is the key process in Algorithm \ref{algorithm1}.
It has a latent variable $\vect{s} = (s_i)_{1\leq i \leq n}\in \mathbb{R}^n$, where each element $s_i$ represents an importance score of the corresponding weight $\theta_i$, and optimizes $\vect{s}$ instead of directly optimizing the discrete variable $\vect{m}$.
Given the score $\vect{s}$, the corresponding $\vect{m}$ is computed by the function $\calculatemask(\vect{s})$, which returns $\vect{m} = (m_i)_{1\leq i \leq n}$ defined as follows:
$m_i = 1$  if  $s_i$  is top $100(1-p)\%$  in  $\{s_i\}_{1\leq i \leq n}$,  otherwise $m_i = 0$, where $p \in (0,1)$ is a hyperparameter representing a sparsity rate of the pruned network.
In the line 3 of Algorithm \ref{edge-popup algorithm}, $\SGD_{\eta, \lambda, \mu}(\vect{s}, \vect{g})$ returns the updated value of $\vect{s}$ by stochastic gradient descent with a learning rate $\eta$, weight decay $\lambda$, momentum coefficient $\mu$, and gradient vector $\vect{g}$.

\vspace{-1mm}

\begin{algorithm}[h]
\SetAlgoNoLine
 \caption{Pseudo code of $\trainmask$}\label{edge-popup algorithm}
 {\bf Input}: $\bm{\theta}, \vect{s} \in \mathbb{R}^n$, $(x,y)$: a labeled data\;
 $\vect{m} \gets \calculatemask(\vect{s})$\tcp*{Calculate the mask $\vect{m}$ with the current scores}
  $\vect{s} \gets \SGD_{\eta, \lambda, \mu}\left(\vect{s},  \nabla_{\vect{\overline{s}} = \vect{m}} \mathcal{L}(f(x; \bm{\theta}\odot \vect{\overline{s}}), y)\right)$\tcp*{Update $\vect{s}$ by the gradient at $\overline{\vect{s}}=\vect{m}$}
$\vect{m} \gets \calculatemask(\vect{s})$\tcp*{Calculate new mask $\vect{m}$ with the updated scores}
 return $\vect{m}, \vect{s}$\;
\end{algorithm}

\vspace{-1mm}

On the theoretical side, Malach et al. \cite{malach2020proving} first provided a mathematical justification of the above empirical observation. They formulated it as an approximation theorem with some assumptions on the network width as follows.

\begin{theorem}[informal statement of Theorem 2.1 in \cite{malach2020proving}]\label{malach's theorem}
Let $f_{\target}(x)$ be an $l$-layered network with bounded weight matrices, and $g(x)$ be a randomly initialized $2l$-layered neural network.
If the width of $g(x)$ is larger than $f_{\target}(x)$ by the factor of a polynomial term, then there probably exists a subnetwork of $g(x)$ that approximates $f_{\target}(x)$.
\end{theorem}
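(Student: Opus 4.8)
The plan is to reduce approximating the whole depth-$l$ target network $f_{\target}(x)=\sigma(W_l\,\sigma(\cdots\sigma(W_1 x)))$ to a large collection of independent ``single-weight'' subproblems --- one for each scalar entry of each $W_i$ --- and to realize each of these by pruning a small, widened two-layer ReLU block of $g$. The elementary building block is the observation that a hidden neuron of $g$ which, after pruning, retains a single incoming edge of random weight $a$ and a single outgoing edge of random weight $b$ computes $\sigma(a t)$ on input $t$, and therefore contributes $ab\,\sigma(t)$ to the next pre-activation when $a>0$ and $|a|\,b\,\sigma(-t)$ when $a<0$. Combining this with the ReLU identity $t=\sigma(t)-\sigma(-t)$, a single target weight $w=(W_i)_{pq}$ applied to an activation $t$ is reproduced up to additive error $\varepsilon\,|t|$ as soon as, among a bundle of such gadgets sharing the input $t$, we can select one with $a>0$ and $ab\approx w$ and one with $a<0$ and $ab\approx w$.

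The key lemma to establish is then an anti-concentration (``hitting'') estimate: if $a_1,b_1,\dots,a_k,b_k$ are i.i.d.\ from the initialization distribution (say, uniform on $[-1,1]$), then for every target value $w$ in a fixed compact subinterval of $(-1,1)$ and every $\varepsilon>0$, with probability at least $1-e^{-c\varepsilon k}$ there is an index $t$ with $a_t>0$ and $|a_tb_t-w|\le\varepsilon$ (and, likewise, one with $a_t<0$). This holds because the product of two such independent variables has a density bounded below on compact subsets of its support, so each trial lands in the $\varepsilon$-window around $w$ with probability $\Omega(\varepsilon)$; choosing $k=O(\varepsilon^{-1}\log(1/\delta))$ drives the failure probability below $\delta$. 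This is exactly where the ``bounded weight matrices'' hypothesis enters: after rescaling so that all entries of the $W_i$ lie in, say, $[-\tfrac12,\tfrac12]$, the per-trial success probability is bounded away from $0$ uniformly over all the weights we must fit, whereas it would degenerate for targets near the endpoints $\pm1$.

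With the gadget and the hitting lemma in hand, $g$ is assembled layer by layer: each target layer $x\mapsto\sigma(W_ix)$ is simulated by two layers of $g$ --- the first whose neurons are the widened bundles of gadgets $\sigma(a\,x_q)$ (the random signs of the $a$'s automatically supplying both $\sigma(x_q)$ and $\sigma(-x_q)$), the second summing the selected gadgets through their outgoing weights and applying the target's ReLU --- which accounts for the depth $2l$, while widening each fan-in by the factor $k$ accounts for the polynomial blow-up in width. A union bound over the $\mathrm{poly}$ many scalar weights, taking per-gadget failure probability $\delta/\#\{\text{weights}\}$, yields the ``probably exists a subnetwork'' conclusion with each gadget of width $O(\varepsilon^{-1}\log(\#\{\text{weights}\}/\delta))$. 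The final ingredient is error propagation through depth: writing the realized layers as $\widetilde W_i=W_i+\Xi_i$ with $\|\Xi_i\|$ controlled by $\varepsilon$ and using that $\sigma$ is $1$-Lipschitz, the layerwise discrepancy satisfies a linear recursion whose coefficients are bounded in terms of $\|W_i\|$ and the (bounded) input domain, so the output error is at most $C(l,B)\cdot\varepsilon$; taking $\varepsilon$ equal to the target accuracy divided by $C(l,B)$ --- still $1/\mathrm{poly}$ --- keeps $k$ polynomial. I expect the main obstacle to be precisely this last step together with the uniformity in the hitting lemma: one must ensure that the approximation error, amplified by passing through $l$ weight matrices and $l$ nonlinearities, is absorbed without forcing the width to grow faster than polynomially, and that the anti-concentration bound is genuinely uniform over the rescaled range of target weights rather than deteriorating near its boundary.
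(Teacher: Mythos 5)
Your proposal is correct in substance and shares the overall skeleton of the argument the paper itself gives (Appendix~\ref{section:appendix-proof}, Lemmas~\ref{app:lemma for 1-dim linear functino w/o iter rand}--\ref{app:lemma for linear map} and the induction in the proof of Theorem~\ref{app:main theorem}, specialized to $R=1$): simulate each target layer by two layers of $g$, split the action of each weight via $t=\sigma(t)-\sigma(-t)$, fit each scalar entry by a dedicated bundle of hidden neurons, union-bound over all entries and layers, and propagate the error through depth using the $1$-Lipschitzness of $\sigma$ and $\lVert F_i\rVert_\frob\le 1$. The genuine difference is the per-weight hitting lemma. The paper asks for two separate window events, $|u\mp 1|\le \epsilon/2$ and $|v\mp w|\le \epsilon/2$, which has probability $\Omega(\epsilon^2)$ per neuron \emph{uniformly} over $w\in[-1,1]$, and this two-coordinate structure is exactly what the paper later pairs with the projection map to gain the $R^2$ factor under re-sampling. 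You instead use anti-concentration of the product $ab$ subject to a sign constraint, enlarging the success event to probability $\Omega(\epsilon)$ and so needing only $O(\epsilon^{-1}\log(1/\delta))$ neurons per weight --- a quantitatively better width, at the price that the product density $\tfrac12\log(1/|z|)$ vanishes at $z=\pm1$, so uniformity fails for entries near $\pm 1$. Your rescaling fix needs a word of care: shrinking the $W_i$ and compensating at the output changes the target (or pushes last-layer entries outside the reachable range $[-1,1]$ of a single product); but this is harmless, since near-endpoint targets still give per-trial probability $\Omega(\epsilon^2)$ (both $a,b\ge 1-O(\epsilon)$), or one can realize $w$ as the sum of two gadgets each hitting $w/2$ --- either way the width stays polynomial, which is all the informal statement requires. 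One step you gloss over, and which the paper handles explicitly in its induction, is keeping the intermediate activations of the pruned network bounded (the paper proves $\lVert\widetilde{x}_k\rVert_\infty\le 2$) so that each block's uniform approximation guarantee applies to the next block's inputs.
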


By considering a well-trained network as $f_{\target}$, Theorem \ref{malach's theorem} indicates that pruning a sufficiently wide $g(x)$ may reveal a subnetwork which achieves good test accuracy as $f_{\target}$, in principle.

In the follow-up works \cite{pensia2020optimal,orseau2020logarithmic}, the assumption on the network width was improved by reducing the factor of the required width to a logarithmic term.
However, Pensia et al. \cite{pensia2020optimal}  showed that the logarithmic order is unavoidable at least in the case of $l=1$.
While their results imply the optimality of the logarithmic bound, it also means that we cannot further relax the assumption on the network width as long as we work in the same setting.
This indicates a limitation of the weight-pruning optimization, i.e. the weight-pruning optimization can train only less expressive models than ones trained with the conventional weight-optimization like SGD, under a given amount of memory or network parameters.

\section{Method}\label{section:methodology}

In this section, we present a novel method called {\it weight-pruning with iterative randomization} (IteRand) for randomly initialized neural networks.

As discussed in Section \ref{background section}, although the original  weight-pruning optimization (Algorithm \ref{algorithm1}) can achieve good accuracy, it still has a limitation in the expressive power under a fixed amount of memory or network parameters.
Our method is designed to overcome this limitation.
The main idea is to randomize pruned weights at each iteration of the weight-pruning optimization.
As we prove in Section \ref{section:theoretical justification}, this reduces the required size of an entire network to be pruned.
We use the same notation and setup as Section \ref{background section}.
In addition, we assume that each weight $\theta_i$ of the network $f(x;\bm{\theta})$ can be re-sampled from $\mathcal{D}_{\param}$ at each iteration of the weight-pruning optimization.

\subsection{Algorithm}

Algorithm \ref{algorithm:iterand} describes our proposed method, IteRand, which extends Algorithm \ref{algorithm1}.
The differences from Algorithm \ref{algorithm1} are \textcolor{DodgerBlue3}{lines 5-7}.
IteRand has a hyperparameter $K_\period \in \mathbb{N}_{\geq 1}$ (line 5).
At the $k$-th iteration, whenever $k$ can be divided by $K_\period$, pruned weights are randomized by $\randomize(\bm{\theta}, \vect{m})$ function (line 6).
There are multiple possible designs for $\randomize(\bm{\theta}, \vect{m})$, which will be discussed in the next subsection.
\begin{algorithm}[ht]
\SetAlgoNoLine
 Initialize $\bm{\theta} \sim \mathcal{D}_{\param}, \vect{s} \sim \mathcal{D}_{\score}$\;
 \While{$k = 0,\cdots, N-1$}{
  Sample a labeled data $(x, y) \sim \mathcal{D}_{\labeled}$\;
  $\vect{m}, \vect{s} \gets \trainmask(\bm{\theta}, \vect{s}; (x, y))$\;
  \begingroup
  \color{DodgerBlue3}
  \If(\tcp*[f]{New if-block added to Algorithm \ref{algorithm1}}){$k+1$ can be divided by $K_\period$}{
      $\bm{\theta} \gets \randomize(\bm{\theta}, \vect{m})$\tcp*{Randomize a subset of pruned weights}
  }
  \endgroup
 }
 return $\vect{m}$, $\bm{\theta}$\;

 \caption{Weight-pruning optimization with iterative randomization (IteRand)}
 \label{algorithm:iterand}
\end{algorithm}

Note that $K_\period$ controls how frequently the algorithm randomizes the pruned weights.
Indeed the total number of the randomizing operations is $\lfloor N / K_\period \rfloor$.
If $K_\period$ is too small, the algorithm is likely to be unstable because it may randomize even the important weights before their scores are well-optimized, and also the overhead of the randomizing operations cannot be ignored.
In contrast, if $K_\period$ is too large, the algorithm becomes almost same as the original weight-pruning Algorithm \ref{algorithm1}, and thus the effect of the randomization disappears.
We fix $K_\period = 300$ on CIFAR-10 (about $1\text{ epoch}$) and $K_\period = 1000$ on ImageNet (about $1/10\text{ epochs}$) in  our experiments (Section \ref{section:experiments}).

\subsection{Designs of \texorpdfstring{$\randomize(\bm{\theta}, \vect{m})$}{Lg}}

Here, we discuss  how to define  $\randomize(\bm{\theta}, \vect{m})$ function.
There are several possible ways to randomize a subset of the parameters $\bm{\theta}$.

\paragraph{Naive randomization.}
For any distribution $\mathcal{D}_{\param}$, a naive definition of the randomization function (which we call {\it naive randomization}) can be given as follows.
\begin{eqnarray*}\label{naive randomization 1}
    \randomize(\bm{\theta}, \vect{m})_i := 
    \begin{cases}
        \theta_i, \ \ \ (\text{if } m_i = 1) \\
        \wt{\theta}_i, \ \ \  (\text{otherwise})
    \end{cases}
\end{eqnarray*}
where we denote the $i$-th component of $\randomize(\bm{\theta}, \vect{m}) \in \mathbb{R}^n$ as $\randomize(\bm{\theta}, \vect{m})_i$, and each $\wt{\theta}_i \in \mathbb{R}$ is a random variable with the distribution $\mathcal{D}_{\param}$.
Also this can be written in another form as
\begin{equation}\label{naive randomization 2}
\randomize(\bm{\theta}, \vect{m}) := \bm{\theta}\odot\vect{m} + \wt{\bm{\theta}}\odot(1-\vect{m}),
\end{equation}
where $\wt{\bm{\theta}} = (\wt{\theta}_i)_{1\leq i \leq n} \in \mathbb{R}^n$ is a random variable with the distribution $\mathcal{D}^n_\param$.

\paragraph{Partial randomization.}

The naive randomization (Eq.~(\ref{naive randomization 2})) is likely to be unstable because it entirely replaces all pruned weights with random values every $K_\period$ iteration.
To increase the stability, we modify the definition of the naive randomization as it replaces a randomly chosen subset of the pruned weights as follows (which we call {\it partial randomization}):
\begin{align}\label{partial randomization}
\randomize(\bm{\theta}, \vect{m}) := \bm{\theta}\odot\vect{m} + \left(\bm{\theta}\odot(1-\vect{b}_r)
+ \wt{\bm{\theta}} \odot \vect{b}_r\right) \odot (1-\vect{m}),
\end{align}
\begin{wrapfigure}{tr}{0.4\textwidth}
\vspace{-6mm}
\includegraphics[width=\linewidth]{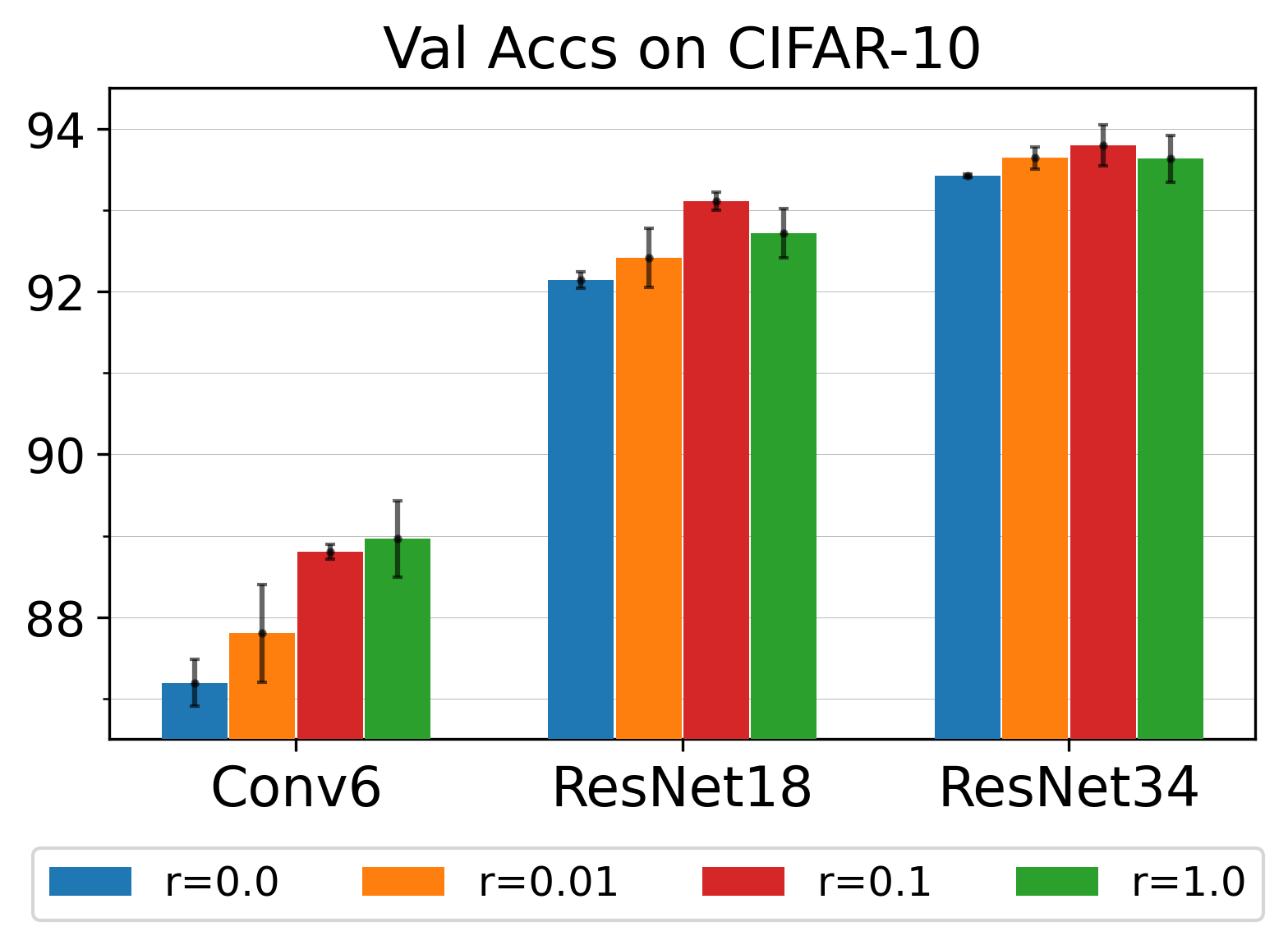} 
\vspace{-4mm}
\caption{{\bf Analysis on $r$.} We compare partial randomizations with $r \in \{0.0, 0.01, 0.1, 1.0\}$ applied to CNNs. The y-axis is the validation accuracy on CIFAR-10.
$r=0.1$ achieves better mean accuracy for every CNNs.
}
\label{figure: analysis on hyperparams}
\vspace{-9mm}
\end{wrapfigure}
where $\wt{\bm{\theta}}\in\mathbb{R}^n$ is the same as in Eq.~(\ref{naive randomization 2}), $r \in [0,1]$ is a hyperparameter and $\vect{b}_r=(b_{r,i})_{1\leq i \leq n} \in \{0,1\}^n$ is a binary vector whose each element is sampled from the Bernoulli distribution $\bernoulli(r)$, i.e. $b_{r,i}=1$ with probability $r$ and $b_{r,i} = 0$ with probability $1-r$.

The partial randomization replaces randomly chosen $100r\%$ of all pruned weights with random values.
Note that, when $r=1$, the partial randomization is equivalent to the naive randomization (Eq.~(\ref{naive randomization 2})).
In contrast, when $r=0$, it never randomizes any weights and thus is equivalent to Algorithm \ref{algorithm1}.

In Figure \ref{figure: analysis on hyperparams}, we observe that $r=0.1$ works well with various network architectures on CIFAR-10, where we use the Kaiming uniform distribution (whose definition will be given in Section \ref{section:experiments}) for  $\mathcal{D}_\param$ and $K_\period = 300$.

\section{Theoretical justification}\label{section:theoretical justification}

In this section, we present a theoretical justification for our iterative randomization approach on the weight-pruning of randomly initialized neural networks.

\subsection{Setup}
We consider a target neural network $f: \mathbb{R}^{d_0} \to \mathbb{R}^{d_l}$ of depth $l$, which is described as follows.
\begin{equation}
    f(x) = F_l \sigma(F_{l-1}\sigma(\cdots F_1(x)\cdots)),
\end{equation}
where $x$ is a $d_0$-dimensional real vector, $\sigma$ is the ReLU activation, and $F_i$ is a $d_i \times d_{i-1}$ matrix.
Our objective is to approximate the target network $f(x)$ by pruning a randomly initialized neural network $g(x)$, which tends to be larger than the target network.

Similar  to the previous works \cite{malach2020proving,pensia2020optimal}, we assume that $g(x)$ is twice as deep as the target network $f(x)$.
Thus, $g(x)$ can be described as
\begin{equation}
    g(x) = G_{2l}\sigma(G_{2l-1}\sigma(\cdots G_1(x)\cdots)),
\end{equation}
where $G_j$ is a $\wt{d}_j \times \wt{d}_{j-1}$ matrix ($j=1,\cdots,2l$) with $\wt{d}_{2i} = d_i$.
Each element of the matrix $G_j$ is assumed to be drawn from the uniform distribution $U[-1,1]$.
Since there is a one-to-one correspondence between pruned networks of $g(x)$ and sequences of binary matrices $M=\{M_j\}_{j=1,\cdots,2l}$ with $M_j \in \{0,1\}^{\wt{d}_j\times\wt{d}_{j-1}}$, every pruned network of $g(x)$ can be  described as
\begin{equation}
g_M(x) = (G_{2l} \odot M_{2l}) \sigma((G_{2l-1} \odot M_{2l-1}) \sigma (\cdots (G_1 \odot M_1)(x)\cdots)).
\end{equation}
Under these setups, we recall that the previous works showed that, with high probability, there exists a subnetwork of $g(x)$ that approximates $f(x)$ when the width of $g(x)$ is larger than $f(x)$ by polynomial factors \cite{malach2020proving} or  logarithmic factors \cite{pensia2020optimal,orseau2020logarithmic}.

\subsection{Formulation and main results}

Now we attempt to mathematically formulate our proposed method, IteRand, as an approximation problem.
As described in Algorithm \ref{algorithm:iterand}, the method consists of two steps: optimizing binary variables $M=\{M_j\}_{j=1,\cdots,2l}$ and randomizing pruned weights in $g(x)$.
The first step can be formulated as the approximation problem of $f(x)$ by some $g_{M}(x)$ as described above.
Corresponding to the second step, we introduce an idealized assumption on $g(x)$ for a given number $R \in \mathbb{N}_{\geq 1}$: each element of the weight matrix $G_j$ can be re-sampled with replacement from the uniform distribution $U[-1,1]$ up to $R-1$ times, for all $j=1,\cdots,2l$. ({\it re-sampling assumption for} $R$)

Under this re-sampling assumption, we obtain the following theorem.

\begin{theorem}[Main Theorem]\label{main theorem}
Fix $\epsilon, \delta > 0$, and we assume that  $\lVert F_i \rVert_\frob \leq 1$.
Let $R \in \mathbb{N}$, and assume that $g(x)$ satisfies the re-sampling assumption for $R$.
\vspace{-1.8mm}

If $\wt{d}_{2i-1} \geq 2d_{i-1} \lceil \frac{64 l^2d_{i-1}^2d_{i}}{\epsilon^2R^2}\log(\frac{2ld_{i-1}d_i}{\delta})\rceil$ holds for all $i=1,\cdots,l$, then
with probability at least $1-\delta$, there exist binary matrices $M=\{M_{j}\}_{1\leq j\leq 2l}$ such that
\begin{equation}\label{main inequality}
    \lVert f(x) - g_{M}(x)\rVert_2 \leq \epsilon, \text{ for } \Vert x \Vert_\infty \leq 1.
\end{equation}
In particular, if $R$ is larger than $\frac{8 ld_{i-1}}{\epsilon}\sqrt{d_{i}\log(\frac{2ld_{i-1}d_i}{\delta})}$, then $\wt{d}_{2i-1} = 2d_{i-1}$ is enough.
\end{theorem}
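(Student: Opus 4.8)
The plan is to follow the standard ``replace each target weight by a small random ReLU gadget'' strategy of \cite{malach2020proving,pensia2020optimal}, but to exploit the re-sampling assumption: each edge of $g(x)$ now comes with $R$ i.i.d.\ $U[-1,1]$ candidate values, so each gadget effectively has $\Theta(R)$ (or more) independent draws per relevant weight, which should cut the number of gadget neurons needed from the logarithmic factor of the earlier results to a factor polynomial in $1/(\epsilon R)$.

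\textbf{Step 1 (reduction to a per-block, per-entry problem).} First I would telescope across the $l$ target layers. Write $g$ as $l$ consecutive ``blocks'', the $i$-th block being $G_{2i}\,\sigma(G_{2i-1}\,\cdot\,)$ (the pruned $G_{2i-1}$, then ReLU, then the pruned $G_{2i}$). I would prune $G_{2i-1}$ so that, for each input coordinate of the block, exactly one positive and one negative entry survive; after $\sigma$ the block then carries the values $\sigma(\pm(\,\cdot\,))$, and $G_{2i}$ can linearly recombine these into a map $\hat F_i(\,\cdot\,)$ whose own trailing ReLU plays the role of the target's ReLU (this is the standard ``absorbed ReLU'' trick, and it is what produces the factor $2d_{i-1}$ in the width bound). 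Since $\sigma$ is $1$-Lipschitz, $\lVert F_i\rVert_\frob\le 1$, and the relevant inputs are bounded ($\lVert x\rVert_\infty\le 1$, and all subsequent block inputs are nonnegative and bounded), errors accumulate at most additively, so it suffices that each block realize $\hat F_i$ with $\lVert\hat F_i-F_i\rVert$ small in a suitable norm; budgeting $\epsilon/l$-scale error per block and $\epsilon/(l d_{i-1}\sqrt{d_i})$-scale error per entry of $F_i$ is enough to obtain (\ref{main inequality}).

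\textbf{Step 2 (the per-entry probabilistic lemma — the crux).} Fix a block $i$, an input coordinate $q$ and a sign, and allocate $m:=\lceil\tfrac{64 l^2 d_{i-1}^2 d_i}{\epsilon^2R^2}\log(\tfrac{2ld_{i-1}d_i}{\delta})\rceil$ hidden units of layer $2i-1$ to it. Each allocated unit contributes, after the $\pm$-decomposition, a term equal to (a kept, possibly re-sampled, incoming-edge value) times (a kept, possibly re-sampled, value on the edge to output $p$), and the per-entry budget forces the sum of these $m$ terms, over the allocated units, to land within $\sim\epsilon/(l d_{i-1}\sqrt{d_i})$ of $(F_i)_{pq}$ simultaneously for every output $p$ and sign. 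The point is that the re-sampling assumption for $R$ provides $R$ i.i.d.\ candidates for each of those edge weights; I would then show, via a Chernoff/Hoeffding-type concentration bound over these random candidates, that with probability at least $1-\delta/(2ld_{i-1}d_i)$ there is a choice of masks and re-sample indices achieving the required tolerance, with the number $m$ of units needed scaling as $1/(\epsilon R)^2$ up to the stated poly and log factors.

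\textbf{Step 3 (assembly) and main obstacle.} A union bound over the $\le 2 d_{i-1}d_i$ entry/sign events inside each block and over the $l$ blocks makes the total failure probability at most $\delta$; on the good event, Step~1's telescoping gives (\ref{main inequality}). The ``in particular'' clause is immediate: $m=1$ already suffices once $\tfrac{64 l^2 d_{i-1}^2 d_i}{\epsilon^2R^2}\log(\tfrac{2ld_{i-1}d_i}{\delta})\le 1$, i.e.\ $R\ge\tfrac{8ld_{i-1}}{\epsilon}\sqrt{d_i\log(\tfrac{2ld_{i-1}d_i}{\delta})}$, in which case $\wt d_{2i-1}=2d_{i-1}$. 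I expect the real work to be Step~2: getting the concentration argument to yield exactly the $1/(\epsilon R)^2$ dependence (with the constant $64$ and the correct trade-off between the re-sampling count $R$ and the per-entry unit count $m$), while keeping the sign bookkeeping and the ``absorbed ReLU'' accounting consistent so that the blocks compose cleanly in Step~1 — in particular, so that the domain on which $\hat F_i$ must be accurate really is the (nonnegative, bounded) image of the previous block.
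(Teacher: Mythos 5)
Your overall skeleton (per-layer approximation of each $F_i$ by one pruned block $G_{2i}\sigma(G_{2i-1}\cdot)$, the $\pm$ decomposition per input coordinate giving the factor $2d_{i-1}$, per-entry error budgets of order $\epsilon/(l d_{i-1}\sqrt{d_i})$, a union bound over entries and layers, telescoping with the $1$-Lipschitzness of $\sigma$ and $\lVert F_i\rVert_\frob\le 1$, and the ``in particular'' clause from $m=1$) matches the paper's proof. The gap is exactly where you flag it: Step 2. A Chernoff/Hoeffding bound over the random candidate values does not establish what you need. The sum of $m$ kept products of independent $U[-1,1]$ weights concentrates around $0$, not around $(F_i)_{pq}$; and ``there exists a choice of masks and re-sample indices achieving the tolerance'' is an existence statement about a combinatorial selection, which concentration of a fixed sum does not provide. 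To make a sum-of-selected-products argument work you would need a random subset-sum theorem as in \cite{pensia2020optimal}, a substantially heavier tool, and it is then not clear how the claimed $1/(\epsilon R)^2$ per-entry unit count, with the specific constant and the interaction between re-sample indices of the \emph{shared} incoming weight (one final value of each first-layer entry must serve all outputs $p$ simultaneously), would come out.

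The paper's mechanism is much simpler and is the one ingredient your proposal is missing: it is a ``lucky pair'' lottery, not a concentration of sums. For a given input coordinate, sign, and output entry, one keeps a \emph{single} hidden unit whose (possibly re-sampled) incoming weight lies within $\epsilon/2$-type distance of $\pm 1$ and whose (possibly re-sampled) outgoing weight lies within $\epsilon/2$-type distance of the target entry; a single unit thus carries the whole entry, and the approximation error splits as $|w_k-V_{ki}|+|u_i\mp 1|$. The re-sampling assumption enters through a counting argument: with $R$ candidates for the incoming weight and $R$ for the outgoing weight of each unit, each allocated unit offers $R^2$ candidate (incoming, outgoing) pairs, so $d'$ units offer $d'R^2$ pairs (formally, pairs $(i_1,i_2)$ with $\pi(i_1)=\pi(i_2)$ under the projection $\pi(k)=\lfloor (k-1)/R\rfloor+1$), each succeeding with probability at least $\epsilon^2/16$-type; the failure probability is then bounded by $(1-\epsilon^2/16)^{d'R^2}$, which is precisely what turns the usual $1/\epsilon^2$ into $1/(\epsilon^2R^2)$ in the width bound and makes the ``in particular'' clause fall out. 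Conflicts when different outputs select different re-sample indices of the same incoming weight are harmless because any of the selected candidates satisfies the same tolerance. Without this pairing/counting idea (or a worked-out subset-sum substitute), your Step 2, and hence the stated width bound, is not established.
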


Theorem \ref{main theorem} shows that the iterative randomization is provably helpful to approximate wider networks in the weight-pruning optimization of a random network.
In fact, the required width for $g(x)$ in Theorem \ref{main theorem} is reduced to even twice as wide as $f(x)$ when the number of re-sampling is sufficiently large, in contrast to the prior results without re-sampling assumption where the required width is logarithmically wider than $f(x)$ \cite{pensia2020optimal,orseau2020logarithmic}.
This means that, under a fixed amount of parameters of $g(x)$, we can achieve a higher accuracy by weight-pruning of $g(x)$ with iterative randomization since a wider target network has a higher model capacity.

In the rest of this section we present the core ideas by proving the simplest case ($l=d_0=d_1=1$) of the theorem, 
while the full proof of Theorem \ref{main theorem} is given in Appendix~\ref{section:appendix-proof}.
Note that the full proof is obtained essentially by applying the argument for the simplest case inductively on the widths and depths.

\subsection{Proof ideas for Theorem \ref{main theorem}}
Let us consider the case of $l=d_0=d_1=1$.
Then the target network $f(x)$ can be written as $f(x) = wx: \mathbb{R}\to\mathbb{R}$, where $w\in\mathbb{R}$, and $g(x)$ can be written as $g(x) = \vect{v}^T\sigma(\vect{u}x)$ where $\vect{u},\vect{v}\in\mathbb{R}^{\wt{d}_1}$.
Also, subnetworks of $g(x)$ can be written as $g_{\vect{m}}(x) = (\vect{v}\odot\vect{m})^T\sigma((\vect{u}\odot\vect{m}) x)$ for some $\vect{m} \in \{0,1\}^{\wt{d}_1}$.

There are two technical points in our proof.
The first point is the following splitting of $f(x)$:
\begin{equation}\label{our splitting}
    f(x) = w\sigma(x) - w\sigma(-x),
\end{equation}
for any $x\in\mathbb{R}$.
This splitting is very similar to the one used in the previous works \cite{malach2020proving,pensia2020optimal,orseau2020logarithmic}:
\begin{equation}\label{previous splitting}
    f(x) = \sigma(wx) - \sigma(-wx).
\end{equation}
However, if we use the latter splitting Eq.~(\ref{previous splitting}), it turns out that we cannot obtain the lower bound of $\wt{d}_{2i-1}$ in Theorem \ref{main theorem} when $d_0 > 0$.
(Here we do not treat this case, but the proof for $d_0 > 0$ is given in Appendix~\ref{section:appendix-proof}.)
Thus we need to use our splitting Eq.~(\ref{our splitting}) instead.

Using Eq.~(\ref{our splitting}), we can give another proof of the following approximation result without iterative randomization, which was already shown in the previous work \cite{malach2020proving}.

\begin{lemma}\label{lemma without iterative randomization}
Fix $\epsilon, \delta\in(0,1), w\in[-1,1], d\in\mathbb{N}$. Let $\vect{u}, \vect{v}\sim U[-1,1]^d$ be uniformly random weights of a $2$-layered neural network $g(x):= \vect{v}^{T}  \sigma(  \vect{u} \cdot x)$.
If $d \geq 2\lceil\frac{16}{\epsilon^2}\log(\frac{2}{\delta})\rceil$ holds, then with probability at least $1-\delta$,
\begin{equation}
    \big|wx - g_\vect{m}(x)  \big| \leq \epsilon, \text{ for all } x \in \mathbb{R}, |x| \leq 1,
\end{equation}
where $g_\vect{m}(x) := (\vect{v} \odot \vect{m})^T \sigma(\vect{u} \cdot x)$ for some $\vect{m} \in \{0,1\}^{d}$.
\end{lemma}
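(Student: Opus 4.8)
The plan is to prove Lemma~\ref{lemma without iterative randomization} by using the splitting $wx = w\sigma(x) - w\sigma(-x)$ from Eq.~(\ref{our splitting}) and then approximating each of the two terms $w\sigma(x)$ and $-w\sigma(-x)$ separately using disjoint halves of the $d$ hidden units. More precisely, I would first set $h := \lceil \frac{16}{\epsilon^2}\log(\frac{2}{\delta})\rceil$, so that $d \geq 2h$, and reserve $h$ hidden units for approximating $w\sigma(x)$ and another $h$ for approximating $-w\sigma(-x)$. The role of each unit $j$ is determined by the sign of $u_j$: a unit with $u_j > 0$ can contribute $v_j u_j\, \sigma(x)$ on $x \geq 0$ (since $\sigma(u_j x) = u_j x = u_j\sigma(x)$ there), and a unit with $u_j < 0$ can contribute $v_j u_j \sigma(-x) \cdot(-1)$-type behavior on $x \leq 0$. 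Since each $u_j$ is symmetric around $0$, roughly half the units fall in each sign class with high probability, so I can afford to sort the $d$ units by the sign of $u_j$ and keep $h$ of each kind (a Chernoff bound handles the event that the split is too unbalanced, or one can simply allocate $d/2$ to each regime up front).

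The core approximation step is this: given $h$ independent hidden units of, say, the positive-$u_j$ type, I want to choose a sub-mask on their outgoing weights $v_j$ so that $\sum_{j \in S} v_j u_j \approx w$, and then this subnetwork computes $(\sum_{j\in S} v_j u_j)\sigma(x) \approx w\sigma(x)$ on the relevant half-line, with the other half-line contributed by the negative-$u_j$ units. The quantity $v_j u_j$ is a product of two independent $U[-1,1]$ variables, hence a bounded random variable taking values in $[-1,1]$; by selecting an appropriate subset $S$ I can realize any target sum up to the granularity of the individual terms. The standard tool here (as in \cite{malach2020proving,pensia2020optimal}) is a subset-sum / covering argument: with $h$ i.i.d.\ samples of a bounded random variable whose support straddles the target, one can hit any point in a suitable interval within error $\epsilon$ with failure probability at most $\delta$, provided $h = \Omega(\epsilon^{-2}\log\delta^{-1})$; the constant $16$ is exactly what falls out of applying Hoeffding to control how finely the partial sums tile the interval $[-1,1]$. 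I would state this as an intermediate claim and cite the corresponding lemma from \cite{malach2020proving} or reprove it via a short Hoeffding argument.

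Assembling the pieces: let $\vect{m}$ be the mask that keeps exactly the chosen outgoing weights $v_j$ from the selected subset $S^+$ among the positive-$u_j$ units, the selected subset $S^-$ among the negative-$u_j$ units, and zeros out everything else (including zeroing out the incoming weights $u_j$ of discarded units, which is harmless since $\sigma(\vect{u}\cdot x)$ is applied coordinatewise). Then for $x \geq 0$, $g_{\vect{m}}(x) = (\sum_{j\in S^+} v_j u_j)\, x$ plus a contribution from $S^-$ that vanishes because $\sigma(u_j x) = 0$ when $u_j < 0, x \geq 0$; so $|g_{\vect{m}}(x) - wx| = |\sum_{j\in S^+} v_j u_j - w|\cdot|x| \leq \epsilon$. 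Symmetrically for $x \leq 0$ using $S^-$. A union bound over the (at most two) subset-sum events and the balance event gives overall success probability $\geq 1-\delta$. The main obstacle — and the place requiring care — is getting the constants to line up with the stated bound $d \geq 2\lceil \frac{16}{\epsilon^2}\log\frac{2}{\delta}\rceil$: one must be careful that the subset-sum lemma is applied with the right target interval and that splitting $\delta$ and $\epsilon$ across the two half-lines does not cost extra factors, and that the product distribution of $v_j u_j$ genuinely has enough spread near the target $w \in [-1,1]$ (it does, since $v_j u_j$ is supported on all of $[-1,1]$). This is also precisely why the new splitting Eq.~(\ref{our splitting}) is used rather than Eq.~(\ref{previous splitting}): here the "slope" to be matched is the product $v_j u_j$ with $u_j$ free, which behaves well, whereas the old splitting forces the incoming weight to equal $w$ and pushes all the approximation burden onto a single coordinate in a way that breaks the clean width bound once $d_0 > 1$.
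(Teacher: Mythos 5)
Your proposal shares the paper's first move (the splitting $wx = w\sigma(x) - w\sigma(-x)$ of Eq.~(\ref{our splitting})), but then diverges: the paper does not match the slope by a subset sum of products $v_ju_j$. It simply observes that for a single neuron the event $\{|u_i-1|\le \epsilon/2,\ |v_i-w|\le \epsilon/2\}$ has probability at least $\epsilon^2/16$, so among $d'=d/2$ neurons one such neuron exists except with probability $(1-\epsilon^2/16)^{d'}\le \delta/2$; the same for a neuron with $(u_j,v_j)$ near $(-1,-w)$ in the other half, and a union bound plus the triangle inequality finishes the proof. The mask keeps exactly two neurons, and the constant $16$ is just the area of the $\epsilon/2\times\epsilon/2$ box. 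Your route replaces this with a sign-based partition of the units and a subset-sum/covering argument on the products $v_ju_j$, and this is where the genuine gaps lie. First, the central intermediate claim is not established: Malach et al.\ have no subset-sum lemma (their argument is exactly the single-neuron hitting argument above), the subset-sum tool is Pensia et al.'s, and your sketch of how to prove it --- ``Hoeffding to control how finely the partial sums tile the interval'' --- is not a valid mechanism; Hoeffding controls deviations of a fixed sum and says nothing about the density of achievable subset sums, so the claim that ``the constant $16$ is exactly what falls out'' is unsupported.

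Second, the allocation step is quantitatively wrong as stated: with $d=2h$ total units you cannot guarantee $h$ units of each sign of $u_j$ with high probability (the expected count per sign is exactly $h$, so roughly half the time one class has fewer than $h$), and a Chernoff bound only helps if you give up a $\Theta(\sqrt{h\log(1/\delta)})$ margin, which your own accounting does not absorb. Since the lemma asserts sufficiency of the specific bound $d \ge 2\lceil 16\epsilon^{-2}\log(2/\delta)\rceil$, an argument whose constants do not line up (as you yourself flag) does not prove the statement; it would at best prove a weaker variant with a larger constant, and only after the subset-sum claim is actually proved (e.g.\ by importing Pensia et al.'s lemma, which in fact needs only logarithmically many units, making your $\epsilon^{-2}$ budget overkill but your justification still incomplete). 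The paper's two-neuron construction avoids all of this: no sign balancing, no covering lemma, and the constant is immediate. It is also worth noting that the paper's structure is chosen so that the single-box probability $\epsilon^2/16$ upgrades cleanly to $(1-\epsilon^2/16)^{d'R^2}$ under the re-sampling assumption via the projection map, which is the point of the subsequent proposition; the subset-sum route would not interact with that extension nearly as directly.
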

\begin{proof}[Proof (sketch)]
We assume that $d$ is an even number as $d = 2d'$ so that
we can split an index set $\{0,\cdots,d-1\}$ of $d$ hidden neurons of $g(x)$ into $I = \{0,\cdots,d'-1\}$ and $J = \{d', \cdots, d-1\}$.
Then we have the corresponding subnetworks $g_I(x)$ and $g_J(x)$  given by $g_I(x) := \sum_{k \in I} v_k \sigma(u_k x), g_J(x) := \sum_{k\in J} v_k \sigma(u_k x)$, which satisfy the equation $g(x) = g_I(x) + g_J(x)$.

By the splitting Eq.~(\ref{our splitting}), it is enough to consider the probabilities for approximating $w\sigma(x)$ by a subnetwork of $g_I(x)$ and for approximating $-w\sigma(-x)$ by a subnetwork of $g_J(x)$. Now we have
\begin{eqnarray}
    \mathbb{P}\left( \not\exists i\in I \text{ s.t. } |u_{i} - 1| \leq \frac{\epsilon}{2}, |v_{i} - w| \leq \frac{\epsilon}{2} \right) \leq \left(1 - \frac{\epsilon^2}{16}\right)^{d'} \leq \frac{\delta}{2},\label{inequality for g_I}\\
    \mathbb{P}\left( \not\exists j\in J \text{ s.t. } |u_{j} + 1| \leq \frac{\epsilon}{2}, |v_{j} + w| \leq \frac{\epsilon}{2} \right) \leq \left(1 - \frac{\epsilon^2}{16}\right)^{d'} \leq \frac{\delta}{2},
\end{eqnarray}
for $d' \geq \frac{16}{\epsilon^2}\log\left(\frac{2}{\delta}\right)$, by a standard argument of the uniform distribution and the inequality $e^x \geq 1 + x$ for $x \geq 0$.
By the union bound,  with probability at least $1-\delta$, we have $i\in I$ and $ j \in J$ such that
\begin{eqnarray*}
\big| w\sigma(x) - v_{i} \sigma(u_{i} x) \big| \leq \frac{\epsilon}{2},\\
\big|-w\sigma(-x) - v_{j} \sigma(u_{j} x) \big|\leq \frac{\epsilon}{2}.
\end{eqnarray*}
Combining these inequalities, we finish the proof.
\end{proof}

The second point of our proof is introducing projection maps to leverage the  re-sampling assumption, as follows.
As in the proof of Lemma \ref{lemma without iterative randomization}, we assume that $d = 2d'$ for some $d'\in\mathbb{N}$ and let $I = \{0,\cdots,d'-1\}, J = \{d',\cdots,d-1\}$.
Now we define a projection map
\begin{equation}
    \pi: \wt{I} \to I,\ \ \ k \mapsto \lfloor k/R \rfloor,
\end{equation}
where $\wt{I} := \{0,\cdots, d'R-1\}$, and $\lfloor \cdot \rfloor$ denotes the floor function.
Similarly for $J$, we can define $\wt{J} := \{d'R, \cdots, dR-1\}$ and the corresponding projection map.
Using these projection maps, we can extend Lemma \ref{lemma without iterative randomization} to the one with the re-sampling assumption, which is the special case of Theorem~\ref{main theorem}:
\begin{proposition}[Theorem \ref{main theorem} with $l=d_0=d_1=1$]
Fix $\epsilon, \delta\in(0,1), w\in[-1,1], d\in\mathbb{N}$. Let $\vect{u}, \vect{v}\sim U[-1,1]^d$ be uniformly random weights of a $2$-layered neural network $g(x):= \vect{v}^{T}  \sigma(  \vect{u} \cdot x)$.
Let $R\in\mathbb{N}$ and we assume that each element of $\vect{u}$ and $\vect{v}$ can be re-sampled with replacement up to $R-1$ times.
If $d \geq 2\lceil\frac{16}{\epsilon^2R^2}\log(\frac{2}{\delta})\rceil$ holds, then with probability at least $1-\delta$,
\begin{equation}
    \big|wx - g_\vect{m}(x)  \big| \leq \epsilon, \text{ for all } x \in \mathbb{R}, |x| \leq 1,
\end{equation}
where $g_\vect{m}(x) := (\vect{v} \odot \vect{m})^T \sigma(\vect{u} \cdot x)$ for some $\vect{m} \in \{0,1\}^{d}$.
\end{proposition}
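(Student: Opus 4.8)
The plan is to mimic the proof of Lemma~\ref{lemma without iterative randomization} but exploit the re-sampling assumption to amplify the number of ``effective'' hidden neurons by a factor of $R$. Concretely, I would again assume $d = 2d'$ is even, split the hidden index set into $I = \{0,\dots,d'-1\}$ and $J = \{d',\dots,d-1\}$, and aim to approximate $w\sigma(x)$ by a subnetwork supported on $I$ and $-w\sigma(-x)$ by one supported on $J$, using the splitting Eq.~(\ref{our splitting}). The key new idea is that for each neuron $k \in I$ with weights $(u_k, v_k)$, the re-sampling assumption lets us draw $R$ independent candidate pairs $(u_k^{(0)}, v_k^{(0)}), \dots, (u_k^{(R-1)}, v_k^{(R-1)})$ from $U[-1,1]^2$; correspondingly I index these by the enlarged set $\wt{I} = \{0,\dots,d'R-1\}$ via the projection $\pi(k') = \lfloor k'/R\rfloor$, so that the $R$ candidates for original neuron $k$ are exactly $\{k' \in \wt{I} : \pi(k') = k\}$. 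The algorithm gets to ``keep'' at most one candidate per original neuron (since at any fixed moment the network only has $d$ physical neurons), but over the $R-1$ re-samplings every candidate value is realized at some point, so the event ``some kept subnetwork approximates $w\sigma(x)$'' is implied by ``for some $k' \in \wt{I}$, the pair $(u_{k'}, v_{k'})$ is close to $(1, w)$''.

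The second step is the probability estimate. For a single candidate pair drawn from $U[-1,1]^2$, the event $|u - 1| \le \epsilon/2$ and $|v - w| \le \epsilon/2$ has probability at least $(\epsilon/2)(\epsilon/2)/(2\cdot 2) = \epsilon^2/16$ (using $w \in [-1,1]$ so that the box around $(1,w)$, intersected with $[-1,1]^2$, still has at least a quarter of each side). Since the $d'R$ candidates indexed by $\wt{I}$ are mutually independent, the probability that none of them lands in this box is at most $(1 - \epsilon^2/16)^{d'R} \le e^{-d'R\epsilon^2/16} \le \delta/2$ whenever $d'R \ge \frac{16}{\epsilon^2}\log(\frac{2}{\delta})$, i.e. whenever $d' \ge \frac{16}{\epsilon^2R^2}\log(\frac{2}{\delta})$ — wait, that would need $d' \ge \frac{16}{\epsilon^2 R}\log(\frac{2}{\delta})$; the stated hypothesis $d \ge 2\lceil \frac{16}{\epsilon^2R^2}\log(\frac{2}{\delta})\rceil$ is in fact stronger than needed for this one-layer case, so it certainly suffices. (I would state the estimate as: $d' R \ge \frac{16}{\epsilon^2 R}\log(\frac 2\delta) \cdot R = \frac{16}{\epsilon^2}\log(\frac 2\delta)$ is guaranteed, closing the bound.) The symmetric estimate holds for $\wt{J}$ with the target pair $(-1, -w)$. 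A union bound over the two events then gives, with probability at least $1-\delta$, indices $k'_I \in \wt{I}$ and $k'_J \in \wt{J}$ whose (re-sampled) weights are within $\epsilon/2$ of $(1,w)$ and $(-1,-w)$ respectively.

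The final step is to assemble the mask $\vect{m}$ and bound the error. I would set $\vect{m}$ to keep precisely the two neurons realizing the good candidates $k'_I$ and $k'_J$ (formally: at the re-sampling round where candidate $k'_I$ is active on physical neuron $\pi(k'_I)$, keep it; likewise for $k'_J$; prune everything else), so that $g_{\vect{m}}(x) = v_{k'_I}\sigma(u_{k'_I}x) + v_{k'_J}\sigma(u_{k'_J}x)$. Then, exactly as at the end of Lemma~\ref{lemma without iterative randomization}, $|w\sigma(x) - v_{k'_I}\sigma(u_{k'_I}x)| \le \epsilon/2$ and $|-w\sigma(-x) - v_{k'_J}\sigma(u_{k'_J}x)| \le \epsilon/2$ for $|x|\le 1$, using $|\sigma(ax) - \sigma(bx)| \le |a-b|\,|x|$ and $|v\sigma(ux) - w\sigma(x)| \le |v-w||\sigma(x)| + |w||\sigma(ux)-\sigma(x)|$; summing via Eq.~(\ref{our splitting}) yields $|wx - g_{\vect{m}}(x)| \le \epsilon$.

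The main obstacle — more conceptual than computational — is arguing carefully that the re-sampling assumption genuinely licenses the union-over-$\wt{I}$ probability computation: one must be precise that although only $d$ weights exist at any instant, the mask in Algorithm~\ref{algorithm:iterand} is updated across iterations and a weight, once matched to its target value on some re-sampling round, can be ``locked in'' by the mask and never disturbed again (or, in the idealized formulation, that the existence of a good mask over the union of all $dR$ sampled values is what we need to prove, and the projection $\pi$ makes the $d'R$ candidates for the $I$-side literally i.i.d.\ $U[-1,1]$). Once that bookkeeping is set up cleanly, the rest is the same elementary uniform-distribution estimate as before with $d'$ replaced by $d'R$ in the exponent.
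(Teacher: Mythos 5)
Your overall scaffolding — the splitting $wx = w\sigma(x)-w\sigma(-x)$, the partition into $I$ and $J$, the enlarged index set $\wt{I}$ with the projection $\pi$, the per-box probability $\epsilon^2/16$, the union bound, and the final assembly of the mask — matches the paper's proof. But there is a genuine gap in the probability estimate, and it is exactly the step where the factor $R^2$ must appear. You couple the re-samples of the input and output weights of a neuron into synchronized candidate \emph{pairs} $(u_k^{(j)}, v_k^{(j)})$, $j=0,\dots,R-1$, so you only have $d'R$ independent trials and your failure bound is $(1-\epsilon^2/16)^{d'R}$. That requires $d' \geq \frac{16}{\epsilon^2 R}\log(\frac{2}{\delta})$, i.e.\ only a linear-in-$R$ saving, whereas the proposition assumes merely $d' \geq \frac{16}{\epsilon^2 R^2}\log(\frac{2}{\delta})$. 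Your attempted reconciliation is an arithmetic slip in the wrong direction: since $R\geq 1$, the stated hypothesis is \emph{weaker}, not stronger, than what your estimate needs; multiplying it by $R$ gives $d'R \geq \frac{16}{\epsilon^2 R}\log(\frac{2}{\delta})$, which falls short of the required $\frac{16}{\epsilon^2}\log(\frac{2}{\delta})$ by a factor of $R$ (your parenthetical ``$d'R \geq \frac{16}{\epsilon^2 R}\log(\frac{2}{\delta})\cdot R$'' does not follow from the hypothesis).

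The missing idea, which is how the paper gets the quadratic gain, is to \emph{decouple} the re-sampling rounds of $u_k$ and $v_k$: each element is re-sampled independently, so once $u_k$ hits a value close to $1$ at some round $i_1$ it can be kept fixed while $v_k$ continues to be re-sampled until some other round $i_2$ brings it close to $w$. The relevant event is therefore the existence of a pair $(i_1,i_2)\in\wt{I}\times\wt{I}$ with $\pi(i_1)=\pi(i_2)$ (same physical neuron, possibly different rounds) such that $|\wt{u}_{i_1}-1|\leq\frac{\epsilon}{2}$ and $|\wt{v}_{i_2}-w|\leq\frac{\epsilon}{2}$; the number of such pairs is $d'R^2$, which is the count the paper uses in its bound $(1-\frac{\epsilon^2}{16})^{d'R^2}$. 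Equivalently, per physical neuron the success probability is the product $\bigl(1-(1-\frac{\epsilon}{4})^R\bigr)\bigl(1-(1-p_w)^R\bigr)$ with $p_w\geq\frac{\epsilon}{4}$, which is of order $\epsilon^2R^2$ (in the regime $\epsilon R\lesssim 1$) rather than your $\epsilon^2 R$ — and that quadratic dependence is precisely what the hypothesis $d \geq 2\lceil\frac{16}{\epsilon^2R^2}\log(\frac{2}{\delta})\rceil$ is calibrated to. Your bookkeeping remarks about locking in a matched weight via the mask are fine; the flaw is solely that you search only over synchronized $(u,v)$ rounds instead of all round pairs within a neuron.
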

\begin{proof}[Proof (sketch)]
Similarly to the proof of Lemma \ref{lemma without iterative randomization}, we utilize the splitting Eq.~(\ref{our splitting}).
We mainly argue on the approximation of $w\sigma(x)$ since the argument for approximating $-w\sigma(x)$ is parallel.

By the assumption that each element of $\vect{u}$ and $\vect{v}$ can be re-sampled up to $R-1$ times, we can replace the probability in Eq.~(\ref{inequality for g_I}) in the proof of Lemma \ref{lemma without iterative randomization}, using the projection map $\pi:\wt{I} \to I$, by
\begin{equation}\label{probability for g_I with iter rand}
    \mathbb{P}\left( \not\exists i_1,i_2\in \wt{I} \text{ s.t. } \pi(i_1)=\pi(i_2),\ \  |\wt{u}_{i_1} - 1| \leq \frac{\epsilon}{2}, \ \ |\wt{v}_{i_2} - w| \leq \frac{\epsilon}{2} \right),
\end{equation}
where $\wt{u}_1,\cdots,\wt{u}_{d'R}, \wt{v}_1,\cdots,\wt{v}_{d'R} \sim U[-1,1]$.
Indeed, since we have
\begin{equation}\label{count pairs (i_1,i_2)}
    \#\{(i_1,i_2)\in\wt{I}\times\wt{I}: \pi(i_1)=\pi(i_2)\} = d'R^2,
\end{equation}
we can evaluate the probability Eq.~(\ref{probability for g_I with iter rand}) as
\begin{equation}\label{inequality with pi}
    \text{Eq.} (\ref{probability for g_I with iter rand}) \leq \left(1-\frac{\epsilon^2}{16}\right)^{d'R^2} \leq \frac{\delta}{2},
\end{equation}
for $d' \geq \frac{16}{\epsilon^2R^2}\log\left(\frac{\delta}{2}\right)$.
Eq.~(\ref{inequality with pi}) can play the same role as Eq.~(\ref{inequality for g_I}) in the proof of Lemma \ref{lemma without iterative randomization}.

Parallel argument can be applied for the approximation of $-w\sigma(x)$ by replacing $I$ with $J$.
The rest of the proof is the same as Lemma \ref{lemma without iterative randomization}.
\end{proof}

\section{Experiments}\label{section:experiments}
In this section, we perform several experiments to evaluate our proposed method, IteRand (Algorithm~\ref{algorithm:iterand}).
Our main aim is to empirically verify the parameter efficiency of IteRand, compared with edge-popup \cite{ramanujan2020what} (Algorithm \ref{algorithm1}) on which IteRand is based.
Specifically, we demonstrate that IteRand can achieve better accuracy than edge-popup under a given amount of network parameters.
In all experiments, we used the partial randomization with $r=0.1$ (Eq.~(\ref{partial randomization})) for $\randomize$ in Algorithm~\ref{algorithm:iterand}.

\paragraph{Setup.}
We used two vision datasets: CIFAR-10 \cite{krizhevsky2009learning} and ImageNet \cite{russakovsky2015imagenet}.
CIFAR-10 is a small-scale dataset of $32\times 32$ images with $10$ class labels. It has $50$k images for training and $10$k for testing.
We randomly split the $50$k training images into $45$k for actual training and $5$k for validation.
ImageNet is a dataset of $224 \times 224$ images with $1000$ class labels.
It has the train set of $1.28$ million images and the validation set of $50$k images.
We randomly split the training images into $99:1$, and used the former for actual training and the latter for validating models.
When testing models, we used the validation set of ImageNet (which we refer to as the test set).
For network architectures,
we used multiple convolutional neural networks (CNNs): Conv6 \cite{frankle2018lottery} as a shallow network and ResNets \cite{he2016deep} as deep networks. Conv6 is a 6-layered VGG-like CNN, which is also used in the prior work \cite{ramanujan2020what}.
ResNets are more practical CNNs with skip connections and batch normalization layers.
Following the settings in Ramanujan et al. \cite{ramanujan2020what}, we used non-affine batch normalization layers, which are layers that only normalize their inputs and do not apply any affine transform, when training ResNets with edge-popup and IteRand.
All of our experiments were performed with 1 GPU (NVIDIA GTX 1080 Ti, 11GB) for CIFAR-10 and 2 GPUs (NVIDIA V100, 16GB) for ImageNet.
The details of the network architectures and hyperparameters for training are given in Appendix~\ref{section:appendix-details}.

\paragraph{Parameter distributions.}
With the same notation as Section \ref{background section}, both IteRand and edge-popup requires two distributions: $\mathcal{D}_\param$ and $\mathcal{D}_\score$.
In our experiments, we consider Kaiming uniform (KU) and signed Kaiming constant (SC) distribution. The KU distribution is the uniform distribution over the interval $[-\sqrt{\frac{6}{c_{\fanin}}}, \sqrt{\frac{6}{c_{\fanin}}}]$ where $c_{\fanin}$ is the constant defined for each layer of ReLU neural networks \cite{pytorch2021nninit}\cite{he2015delving}.
The SC distribution is the uniform distribution over the two-valued set $\{-\sqrt{\frac{2}{c_{\fanin}}}, \sqrt{\frac{2}{c_{\fanin}}}\}$, which is introduced by Ramanujan et al. \cite{ramanujan2020what}.
We fix $\mathcal{D}_\score$ to the KU distribution, and use the KU or SC distribution for $\mathcal{D}_\param$.

\subsection{Varying the network width}\label{subsection: varying the network width}
To demonstrate the parameter efficiency, we introduce a hyperparameter $\rho$ of the width factor for Conv6, ResNet18 and ResNet34.
The details of this modification are given in Appendix~\ref{section:appendix-details}.
We train and test these networks on CIFAR-10 using IteRand, edge-popup and SGD, varying the width factor $\rho$ in $\{0.25, 0.5, 1.0, 2.0\}$ for each network (Figure \ref{figure: varying width}).
In the experiments for IteRand and edge-popup, we used the sparsity rate of $p=0.5$ for Conv6 and $p=0.6$ for ResNet18 and ResNet34.
Our method outperforms the baseline method for various widths. The difference in accuracy is large both when the width is small ($\rho \leq 0.5$), and when $\mathcal{D}_\param$ is the KU distribution where edge-popup struggles.

\begin{figure}[htb]
  \includegraphics[width=0.99\linewidth]{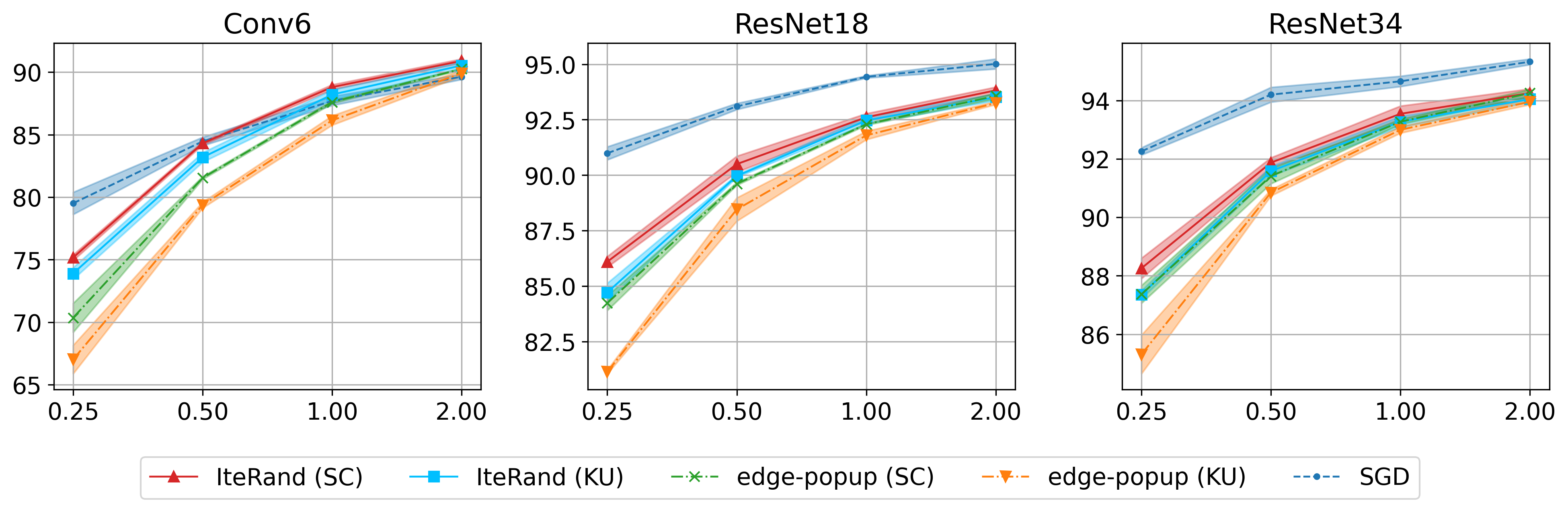}
  \caption{We train and evaluate three CNNs (Conv6 , ResNet18 and ResNet34) with various widths on CIFAR-10.
  The x-axis is the width factor $\rho$ and the y-axis is the accuracy on the test set.
  We plot the mean $\pm$ standard deviation over three runs for each experiment.
  KU/SC in the legend represents the distribution used for $\mathcal{D}_\param$.
  }
  \label{figure: varying width}
\end{figure}

\subsection{ImageNet experiments}

The parameter efficiency of our method is also confirmed on ImageNet (Figure \ref{subfigure: params vs accs}).
ImageNet is more difficult than CIFAR-10, and thus more complexity is required for networks to achieve competitive performance.
Since our method can increase the network complexity as shown in Section \ref{section:theoretical justification}, the effect is significant in ImageNet especially when the complexity is limited such as ResNet34.

In addition to the parameter efficiency, we also observe the effect of iterative randomization on the behavior of  optimization process, by plotting training curves (Figure \ref{subfigure: training curves}).
Surprisingly, IteRand achieves significantly better performance than edge-popup at the early stage of the optimization, which indicates that the iterative randomization accelerates the optimization process especially when the number of iterations is limited.

\begin{figure}[t]
  \begin{subfigure}{0.5\textwidth}
    \centering
    \includegraphics[width=.95\textwidth]{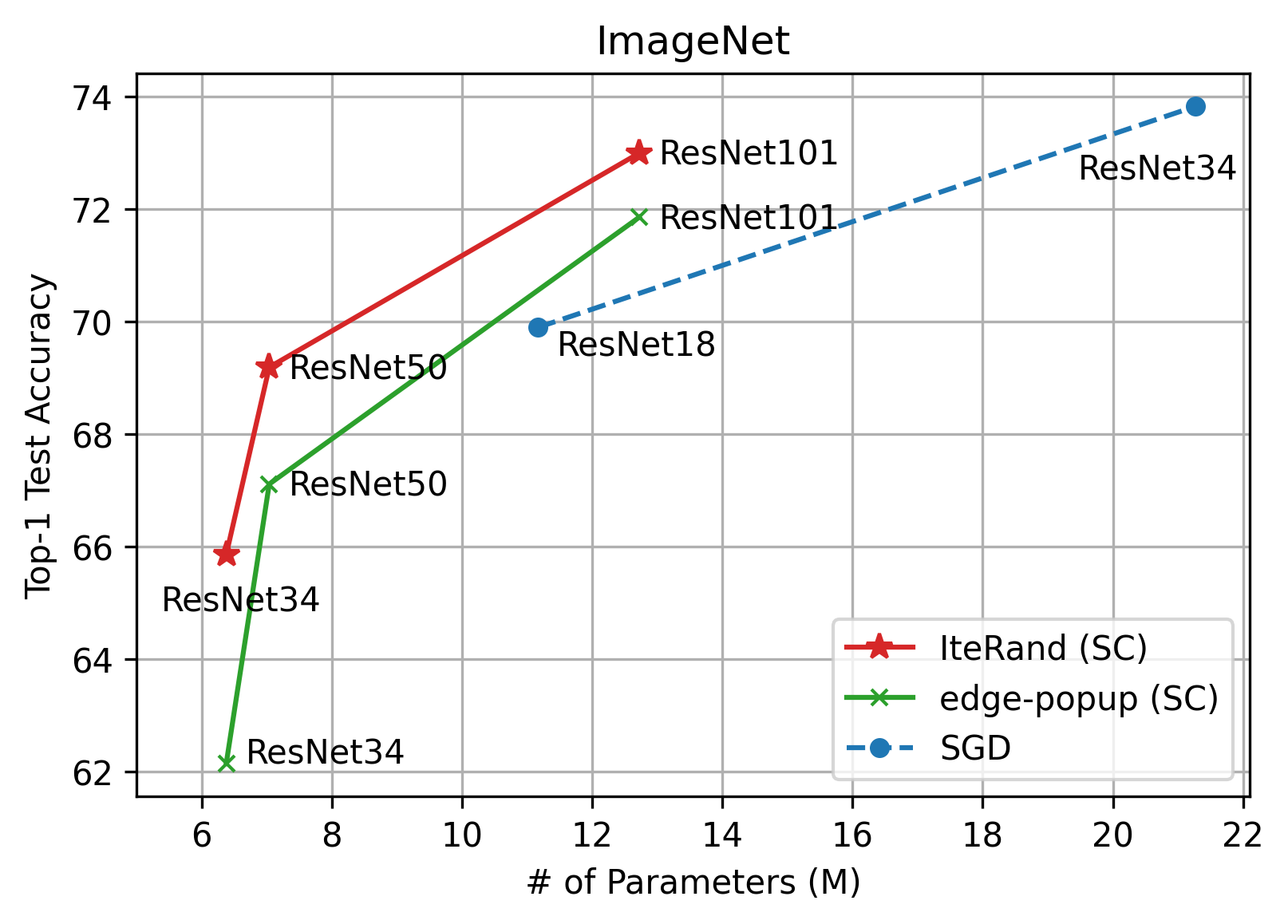}
  \caption{Network Size vs. Test Accuracy.}
  \label{subfigure: params vs accs}
  \end{subfigure}%
  \begin{subfigure}{0.47\textwidth}
    \begin{subfigure}{\textwidth}
      \renewcommand\thesubfigure{\alph{subfigure}1}
      \centering
      \includegraphics[width=.9\textwidth]{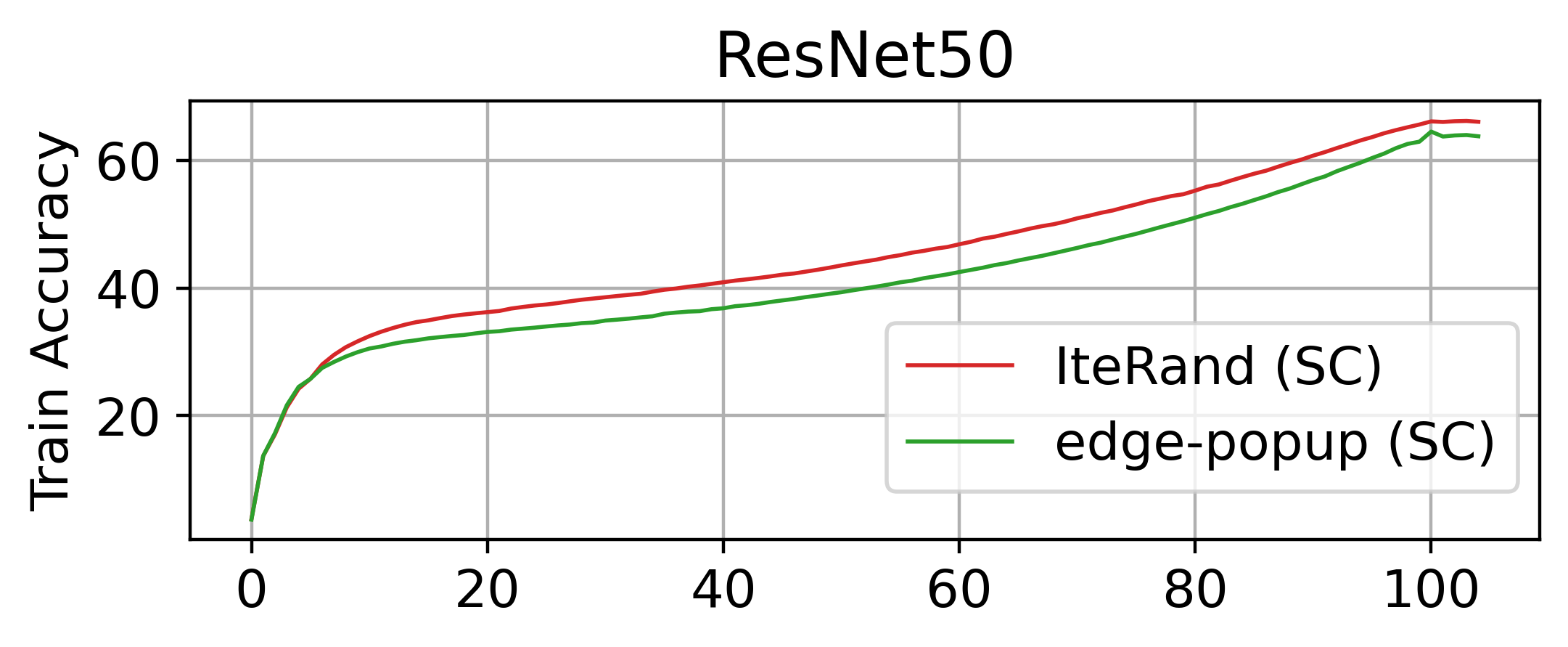}
    \end{subfigure}
    \begin{subfigure}{\textwidth}
      \renewcommand\thesubfigure{\alph{subfigure}2}
      \centering
    \vspace{-1mm}
      \includegraphics[width=.9\textwidth]{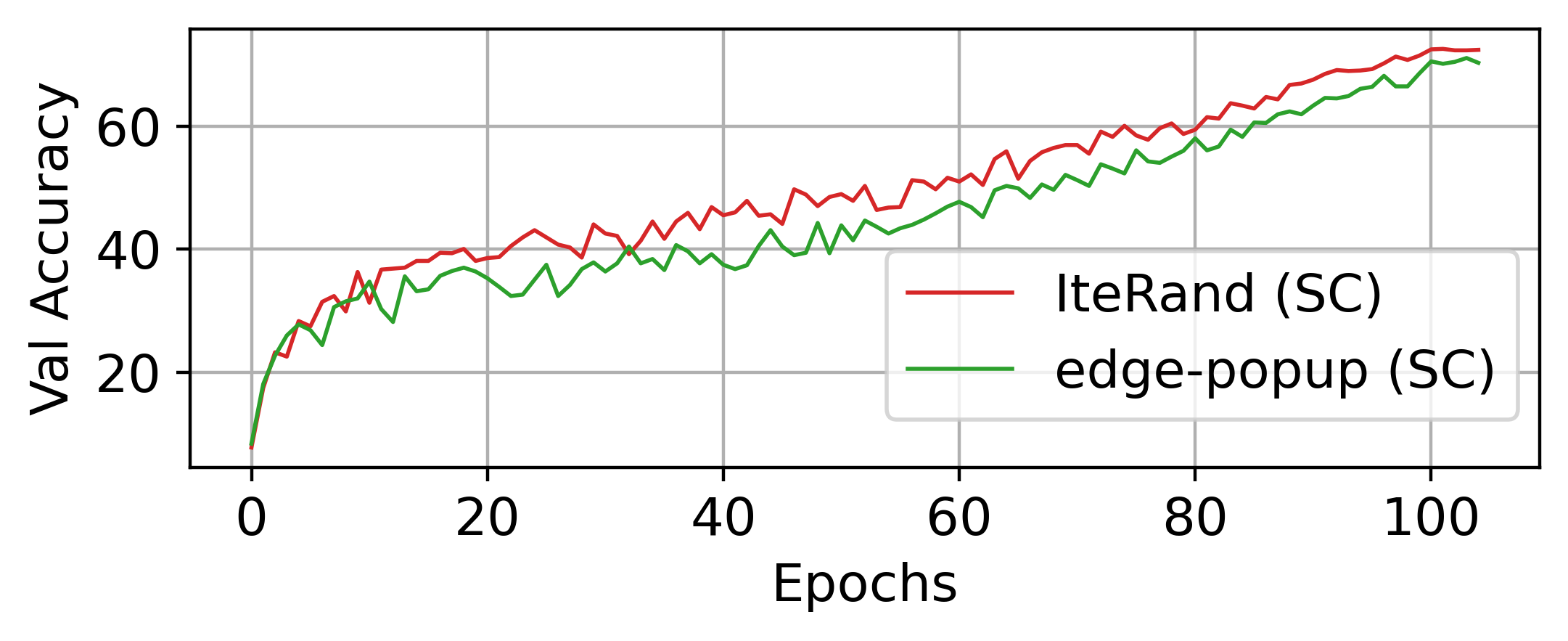}
    \end{subfigure}
  \vspace{-0.5mm}
  \caption{Training Curves.}
  \label{subfigure: training curves}
  \end{subfigure}
  \caption{We compare IteRand with edge-popup on ImageNet. We used a fixed $p=0.7$ for the sparsity rate (Section \ref{background section}) following Ramanujan et al.~\cite{ramanujan2020what}. Figure (a) shows parameter-accuracy tradeoff curves on the test set.
In Figure (b), we plot the train/val accuracy of ResNet50 during optimization. IteRand outperforms edge-popup from early epochs. }
\label{figure: imagenet experiments}
\end{figure}

\section{Related work}

\paragraph{Lottery ticket hypothesis}

Frankle and Carbin \cite{frankle2018lottery} originally proposed the lottery ticket hypothesis.
Many properties and applications have been studied in subsequent works, such as transferability of winning subnetworks \cite{morcos2019one}, sparsification before training \cite{lee2018snip,wang2020Picking,tanaka2020pruning} and further ablation studies on the hypothesis \cite{zhou2019deconstructing}.

Zhou et al \cite{zhou2019deconstructing} surprisingly showed that only pruning randomly initialized networks without any optimization on their weights can be a training method surprisingly.
Ramanujan et al. \cite{ramanujan2020what} went further by proposing an effective pruning algorithm on random networks, called edge-popup, and achieved competitive accuracy compared with standard training algorithms by weight-optimization \cite{rumelhart1986learning}.
Malach et al. \cite{malach2020proving} mathematically formalized their pruning-only approach as an approximation problem for a target network and proved it with lower bound condition on the width of random networks to be pruned.
Subsequent works \cite{pensia2020optimal,orseau2020logarithmic} successfully relaxed the lower bound to the logarithmic factor wider than the width of a target network.
Our work can be seen as an extension of their works to allow re-sampling of the weights of the random networks for finite $R$ times, and we showed that the logarithmic factor can be reduced to a constant one when $R$ is large enough. (See Section \ref{section:theoretical justification}.)

\paragraph{Neural network pruning and regrowth}

Studies of finding sparse structures of neural networks date back to the late 1980s \cite{hanson1988comparing,lecun1990optimal}.
There are many approaches to sparsify networks, such as magnitude-based pruning \cite{han2015learning}, $L_0$ regularization \cite{louizos2018learning} and variational dropout \cite{molchanov2017variational}.
Although these methods only focus on pruning unnecessary weights from the networks, there are several studies on re-adding new weights during sparsification \cite{hoefler2021sparsity} to maintain the model complexity of the sparse networks.
Han et al. \cite{han2017dsd} proposed a dense-sparse-dense (DSD) training algorithm consisting of three stages: dense training, pruning, and recovering the pruned weights as zeros followed by dense training.
Mocanu et al. \cite{mocanu2018scalable} proposed sparse evolutionary training (SET), which repeats pruning and regrowth with random values at the end of each training epoch.
Pruning algorithms proposed in other works \cite{bellec2018deep,mostafa2019parameter,evci2020rigging} are designed to recover pruned weights by zero-initialization instead of random values, so that the recovered weights do not affect the outputs of the networks.
While these methods are similar to our iterative randomization method in terms of the re-adding processes, all of them use weight-optimization to train networks including re-added weights, in contrast to our pruning-only approach.

\section{Limitations}\label{section:limitations}

There are several limitations in our theoretical results. (1) Theorem \ref{main theorem} indicates only the existence of subnetworks that approximate a given neural network, not whether our method works well empirically.
(2) The theorem focused on the case when the parameter distribution $\mathcal{D}_\param$ is the uniform distribution over the interval $[-1,1]$.
Thus, generalizing our theorem to other distributions, such as the uniform distribution over binary values $\{-1,1\}$ \cite{diffenderfer2021multiprize}, is left for future work.
(3) The required width given in the theorem may not be optimal.
Indeed, prior work \cite{pensia2020optimal} showed that we can reduce the polynomial factors in the required width to logarithmic ones in the case when the number of re-sampling operations $R = 1$.
Whether we can reduce the required width for $R > 1$ remains an open question.

Also our algorithm (IteRand) and its empirical results have several limitations.
(1) Pruning randomly-initialized networks without any randomization can reduce the storage cost by saving only a single random seed and the binary mask representing the optimal subnetwork. However, if we save the network pruned with IteRand in the same way, it requires more storage cost: $R$ random seeds and $R$ binary masks, where $R$ is the number of re-samplings.
(2) Although our method can be applied with any score-based pruning algorithms (e.g. Zhou et al \cite{zhou2019deconstructing} and Wang et al \cite{wang2020pruning}), we evaluated our method only combined with edge-popup \cite{ramanujan2020what}, which is the state-of-the-art algorithm for pruning random networks.
Since our theoretical results do not depend on any pruning algorithms, we expect that our method can be effectively combined with better pruning algorithms that emerge in the future.
(3) We performed experiments mainly on the tasks of image classification.
An intriguing question is how effectively our method works on other tasks such as language understanding, audio recognition, and deep reinforcement learning with various network architectures.

\section{Conclusion}

In this paper, we proposed a novel framework of iterative randomization (IteRand) for pruning randomly initialized neural networks.
IteRand can virtually increase the network widths without any additional memory consumption, by randomizing pruned weights of the networks iteratively during the pruning procedure.
We verified its parameter efficiency both theoretically and empirically.

Our results indicate that the weight-pruning of random networks may become a practical approach to train the networks when we apply the randomizing operations enough times.
This opens up the possibility that the weight-pruning can be used instead of the standard weight-optimization within the same memory budget.

\section*{Acknowledgement}
The authors thank Kyosuke Nishida and Hengjin Tang for valuable discussions in the early phase of our study.
We also thank Osamu Saisho for helpful comments on the manuscript.
We appreciate CCI team in NTT for building and maintaining their computational cluster on which most of our experiments were computed.

\bibliographystyle{plain}
\bibliography{references}

\begin{appendix}

\section{Proof of main theorem (Theorem 4.1)}\label{section:appendix-proof}

\subsection{Settings and main theorem}

Let $d_0, \cdots, d_l \in \mathbb{N}_{\geq 1}$.
We consider a target neural network $f: \mathbb{R}^{d_0} \to \mathbb{R}^{d_l}$ of depth $l$, which is described as follows.
\begin{equation}
    f(x) = F_l \sigma(F_{l-1}\sigma(\cdots F_1(x)\cdots)),
\end{equation}
where $x$ is a $d_0$-dimensional real vector, $\sigma$ is the ReLU activation, and $F_i$ is a $d_i \times d_{i-1}$ matrix.
Our objective is to approximate the target network $f(x)$ by pruning a randomly initialized neural network $g(x)$, which tends to be larger than the target network.

Similar  to the previous works \cite{malach2020proving,pensia2020optimal}, we assume that $g(x)$ is twice as deep as the target network $f(x)$.
Thus, $g(x)$ can be described as
\begin{equation}
    g(x) = G_{2l}\sigma(G_{2l-1}\sigma(\cdots G_1(x)\cdots)),
\end{equation}
where $G_j$ is a $\wt{d}_j \times \wt{d}_{j-1}$ matrix ($\wt{d}_j\in\mathbb{N}_{\geq 1}\text{ for } j=1,\cdots,2l$) with $\wt{d}_{2i} = d_i$.
Each element of the matrix $G_j$ is assumed to be drawn from the uniform distribution $U[-1,1]$.
Since there is a one-to-one correspondence between pruned networks of $g(x)$ and sequences of binary matrices $M=\{M_j\}_{j=1,\cdots,2l}$ with $M_j \in \{0,1\}^{\wt{d}_j\times\wt{d}_{j-1}}$, every pruned network of $g(x)$ can be  described as
\begin{equation}
g_M(x) = (G_{2l} \odot M_{2l}) \sigma((G_{2l-1} \odot M_{2l-1}) \sigma (\cdots (G_1 \odot M_1)(x)\cdots)).
\end{equation}

We introduce an idealized assumption on $g(x)$ for a given number $R \in \mathbb{N}_{\geq 1}$: each element of the weight matrix $G_j$ can be re-sampled with replacement from the uniform distribution $U[-1,1]$ up to $R-1$ times, for all $j=1,\cdots,2l$ ({\it re-sampling assumption for} $R$).
Here, re-sampling with replacement means that we sample a new value for an element of $G_j$ and replace the old value of the element by the new one.

Under this re-sampling assumption, we describe our main theorem as follows.
\begin{theorem}[Main Theorem]\label{app:main theorem}
Fix $\epsilon, \delta > 0$, and we assume that  $\lVert F_i \rVert_\frob \leq 1$.
Let $R \in \mathbb{N}$
and we assume that each element of $G_i$ can be re-sampled with replacement from the uniform distribution $U[-1,1]$ up to $R-1$ times.

If $\wt{d}_{2i-1} \geq 2d_{i-1} \lceil \frac{64 l^2d_{i-1}^2d_{i}}{\epsilon^2R^2}\log(\frac{2ld_{i-1}d_i}{\delta})\rceil$ holds for all $i=1,\cdots,l$, then
with probability at least $1-\delta$, there exist binary matrices $M=\{M_{j}\}_{1\leq j\leq 2l}$ such that
\begin{equation}\label{app:main inequality}
    \lVert f(x) - g_{M}(x)\rVert_2 \leq \epsilon, \text{ for } \Vert x \Vert_\infty \leq 1.
\end{equation}
In particular, if $R$ is larger than $\frac{8 ld_{i-1}}{\epsilon}\sqrt{d_{i}\log(\frac{2ld_{i-1}d_i}{\delta})}$, then $\wt{d}_{2i-1} = 2d_{i-1}$ is enough.
\end{theorem}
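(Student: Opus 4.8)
The plan is to reduce Theorem~\ref{app:main theorem} to the base case $l=d_0=d_1=1$ established in Section~\ref{section:theoretical justification}, via two nested reductions --- an \emph{outer} reduction over depth and an \emph{inner} reduction over width. For the outer reduction I would cut the depth-$2l$ network $g$ into $l$ consecutive two-layer blocks, the $i$-th block being the matrix pair $(G_{2i-1},G_{2i})$ (which maps $\mathbb{R}^{d_{i-1}}$ to $\mathbb{R}^{d_i}$ through a hidden layer of width $\wt d_{2i-1}$), and impose a \emph{per-block} requirement. Writing $a_0=x$ and $a_i=\sigma(F_i a_{i-1})$ for $i=1,\dots,l-1$ for the target's intermediate activations, I would ask that, when fed $a_{i-1}$ as input, the pruned and re-sampled $i$-th block output a vector $\epsilon_i$-close to $F_i a_{i-1}$. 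Given these guarantees, the global bound $\lVert f(x)-g_M(x)\rVert_2\le\epsilon$ follows by a routine layerwise-perturbation argument: ReLU is $1$-Lipschitz and $\lVert F_i\rVert_{\mathrm{op}}\le\lVert F_i\rVert_\frob\le1$, so the intermediate activations stay bounded in $\ell_2$ (as $\lVert x\rVert_\infty\le1$ and every layer is a contraction) and the per-block errors accumulate only additively; taking each $\epsilon_i=\Theta(\epsilon/l)$ then suffices.

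For the inner reduction I would mimic the proof of the base case. Using the splitting Eq.~(\ref{our splitting}) coordinatewise turns each entry $\sum_k F_{i,ok}a_{i-1,k}$ of $F_i a_{i-1}$ into $\sum_k\big(F_{i,ok}\sigma(a_{i-1,k})-F_{i,ok}\sigma(-a_{i-1,k})\big)$, i.e.\ into scalar two-neuron sub-problems. Accordingly I would partition the $\wt d_{2i-1}$ hidden units of block $i$ into $2d_{i-1}$ groups indexed by (input coordinate $k$, sign $s$) and further into $d_i$ sub-groups indexed by the output coordinate $o$; each unit is masked so that on the input side it listens only to coordinate $k$ and on the output side feeds only coordinate $o$, and it \emph{realizes} the piece $(k,o,s)$ when its surviving input weight is within $\epsilon'$ of $s$ and its surviving output weight within $\epsilon'$ of $sF_{i,ok}$, where $\epsilon'=\Theta(\epsilon/(l\,\mathrm{poly}(d_{i-1},d_i)))$ is the per-piece precision forced by the error budget above (and $|F_{i,ok}|\le\lVert F_i\rVert_\frob\le1$, so both targets lie in $[-1,1]$). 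Exactly as in the base case, the re-sampling assumption for $R$ enters via the projection map $k\mapsto\lfloor k/R\rfloor$: counting the $\Theta(R^2)$ admissible pairs of re-sample indices per physical unit --- one for the input weight, one for the output weight --- upgrades the probability that a fixed piece is realized by no unit in its sub-group from $(1-\Theta((\epsilon')^2))^{(\text{units per piece})}$ to $(1-\Theta((\epsilon')^2))^{\Theta(R^2)\cdot(\text{units per piece})}$, which is exactly the step that creates the $R^2$ in the denominator of the width bound (compare Eq.~(\ref{inequality with pi})).

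What remains is bookkeeping. A union bound over all pieces in all blocks (there are $O(\sum_i d_{i-1}d_i)$ of them), each assigned failure probability at most $\delta/(2ld_{i-1}d_i)$, keeps the overall failure probability at most $\delta$; substituting the required $\epsilon'$ and this confidence level into the per-sub-group estimate forces the number of units allotted to each piece to be at least $\tfrac{64l^2d_{i-1}^2}{\epsilon^2R^2}\log\tfrac{2ld_{i-1}d_i}{\delta}$, hence $\wt d_{2i-1}\ge2d_{i-1}\big\lceil\tfrac{64l^2d_{i-1}^2d_i}{\epsilon^2R^2}\log\tfrac{2ld_{i-1}d_i}{\delta}\big\rceil$ once the $2d_{i-1}$ groups and the $d_i$ sub-groups are multiplied back in. The ``in particular'' clause is then immediate: the requirement collapses to $\wt d_{2i-1}=2d_{i-1}$ exactly when the ceiling equals $1$, i.e.\ when $\tfrac{64l^2d_{i-1}^2d_i}{\epsilon^2R^2}\log\tfrac{2ld_{i-1}d_i}{\delta}\le1$, which rearranges to $R\ge\tfrac{8ld_{i-1}}{\epsilon}\sqrt{d_i\log\tfrac{2ld_{i-1}d_i}{\delta}}$.

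I expect the main obstacle to be exactly this bookkeeping: one must pick the per-piece precision $\epsilon'$ and the allocation of $\delta$ so that simultaneously the $l$ layerwise perturbations compose to at most $\epsilon$, the union bound over all pieces and layers costs at most $\delta$, and the resulting per-piece unit count equals the quantity inside the ceiling --- and these constraints interact through the various $\sqrt{d_{i-1}}$ and $\sqrt{d_i}$ factors that appear when converting between $\ell_\infty$, $\ell_1$ and $\ell_2$ norms and when bounding the intermediate activations. A secondary but real subtlety, already flagged in Section~\ref{section:theoretical justification}, is that one must use Eq.~(\ref{our splitting}) rather than Eq.~(\ref{previous splitting}): with the latter the input-side targets become the entries $F_{i,ok}$ themselves instead of the fixed values $\pm1$, and the width bound for $\wt d_{2i-1}$ can no longer be recovered once $d_0\ge1$.
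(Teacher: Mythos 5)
Your outer reduction over depth, the use of the splitting Eq.~(\ref{our splitting}), the block-diagonal first-layer mask that makes each hidden unit listen to a single input coordinate, and the $\Theta(R^2)$ count of admissible re-sample pairs via the projection map are all exactly the paper's route (Lemmas~\ref{app:lemma for 1-dim linear functino with iter rand}, \ref{app:lemma for linear map} and the final induction). The genuine gap is in how you handle the output dimension: you pre-assign each hidden unit of block $i$ to a single triple (input coordinate $k$, output coordinate $o$, sign $s$), i.e.\ you split every input-sign pool into $d_i$ dedicated sub-pools. With that architecture the bookkeeping cannot close to the stated bound. The per-piece precision is forced to be $\epsilon'=\Theta\big(\frac{\epsilon}{l\,d_{i-1}\sqrt{d_i}}\big)$ (the $\sqrt{d_i}$ arises when the per-output-coordinate errors are assembled into the $\ell_2$ bound), so the per-pair success probability is $\Theta\big(\frac{\epsilon^2}{l^2 d_{i-1}^2 d_i}\big)$ and each dedicated sub-pool must contain $\Omega\big(\frac{l^2 d_{i-1}^2 d_i}{\epsilon^2 R^2}\log\frac{l d_{i-1} d_i}{\delta}\big)$ units; multiplying back by the $2d_{i-1}d_i$ pieces gives a width quadratic in $d_i$, a factor $d_i$ worse than the theorem. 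Your claimed per-piece count $\frac{64 l^2 d_{i-1}^2}{\epsilon^2 R^2}\log\frac{2ld_{i-1}d_i}{\delta}$ corresponds to $\epsilon'=\frac{\epsilon}{4 l d_{i-1}}$, i.e.\ it drops the $\sqrt{d_i}$, and with that precision the block error is only bounded by $\sqrt{d_i}\,\epsilon/(2l)$ rather than $\epsilon/(2l)$. No reallocation of $\epsilon'$ and $\delta$ repairs this; the loss is structural, not bookkeeping.

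The paper avoids it in Lemma~\ref{app:lemma for single variable linear map} by not fixing the unit-to-output assignment in advance: within each (input coordinate, sign) pool of size $d'$, every output coordinate $o$ searches the \emph{whole} pool (over all $d'R^2$ re-sample pairs) for a unit whose input weight is near $\pm 1$ and whose $o$-th output weight is near $\pm F_{i,ok}$; since the input-weight condition is common to all outputs and the second-layer entries attached to one unit are distinct parameters for distinct $o$, a union bound over the $d_i$ output coordinates needs only $d'\geq \frac{64 l^2 d_{i-1}^2 d_i}{\epsilon^2 R^2}\log\frac{2ld_{i-1}d_i}{\delta}$, linear in $d_i$. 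Replacing your fixed sub-grouping by this ``search the whole pool, assign after sampling'' step makes the rest of your plan go through. A secondary point: your per-block guarantee is stated only at the exact activations $a_{i-1}$, but the layerwise composition evaluates block $i$ at the perturbed activations $\wt{\bf x}_{i-1}$, so the guarantee must hold on a ball containing them (the paper uses positive homogeneity to pass from $\lVert x\rVert_\infty\leq 1$ to $\lVert x\rVert_\infty\leq 2$ at the cost of a factor $2$ in the error).
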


\subsection{Proof of Theorem \ref{app:main theorem}}

Our proof is based on the following simple observation, similar to the arguments in Malach et al. \cite{malach2020proving}.
\begin{lemma}\label{app:uniform sampling lemma}
Fix some $n \in \mathbb{N}$, $\alpha \in [-1, 1]$ and $\epsilon,\delta \in (0,1)$.
Let $X_1, \cdots, X_n \sim U[-1, 1]$. If $n \geq \frac{2}{\epsilon}\log(\frac{1}{\delta})$ holds, then with probability at least $1-\delta$, we have 
\begin{equation}
    |\alpha - X_i| \leq \epsilon,
\end{equation}
for some $i \in \{1,\cdots,n\}$.
\end{lemma}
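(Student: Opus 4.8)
\textbf{Proof plan for Lemma~\ref{app:uniform sampling lemma}.}
The plan is to reduce the ``there exists a good $X_i$'' statement to a simple independence argument plus a union-bound-free product estimate, exactly mirroring the computation already carried out in Eq.~(\ref{inequality for g_I}). First I would fix $\alpha \in [-1,1]$ and observe that for a single $X \sim U[-1,1]$, the event $\{|\alpha - X| \leq \epsilon\}$ corresponds to $X$ landing in the interval $[\alpha-\epsilon, \alpha+\epsilon] \cap [-1,1]$. Since $\alpha \in [-1,1]$, this intersection always contains an interval of length at least $\epsilon$ (one of the two half-intervals $[\alpha-\epsilon,\alpha]$ or $[\alpha,\alpha+\epsilon]$ stays inside $[-1,1]$, possibly truncated, but at least $[\alpha,\min(\alpha+\epsilon,1)]$ or $[\max(\alpha-\epsilon,-1),\alpha]$ has length $\geq \min(\epsilon,1-\alpha)$ or $\geq\min(\epsilon,\alpha+1)$; taking whichever side is longer gives length $\geq \epsilon/1$... more carefully, at least one side has length $\geq \epsilon$ whenever $\epsilon \leq 1$, but to be safe one gets length $\geq \epsilon$ unconditionally is false, so I would just use length $\geq \epsilon$ when $\alpha$ is in the bulk and handle endpoints by noting the longer side always has length $\geq \epsilon/1$... ). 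The cleanest route: the measure of $[\alpha-\epsilon,\alpha+\epsilon]\cap[-1,1]$ under $U[-1,1]$ is at least $\epsilon/2$ (half the mass of a width-$2\epsilon$ window, since the nearer boundary can chop off at most half). Hence $\mathbb{P}(|\alpha - X_i| > \epsilon) \leq 1 - \epsilon/2$.

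Wait — the stated bound is $n \geq \frac{2}{\epsilon}\log(\frac1\delta)$, which matches $\mathbb{P}(|\alpha-X_i|>\epsilon) \leq 1-\epsilon/2$, so this is exactly the constant I want. So the next step is: by independence,
\begin{equation*}
    \mathbb{P}\!\left(\,|\alpha - X_i| > \epsilon \ \text{ for all } i=1,\dots,n\right) = \prod_{i=1}^n \mathbb{P}(|\alpha-X_i|>\epsilon) \leq \left(1 - \tfrac{\epsilon}{2}\right)^n.
\end{equation*}
Then I would apply the elementary inequality $1 - t \leq e^{-t}$ (equivalently $e^x \geq 1+x$, as used in the sketch of Lemma~\ref{lemma without iterative randomization}) with $t = \epsilon/2$ to get $(1-\epsilon/2)^n \leq e^{-n\epsilon/2}$. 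Finally, the hypothesis $n \geq \frac{2}{\epsilon}\log(\frac{1}{\delta})$ gives $e^{-n\epsilon/2} \leq e^{-\log(1/\delta)} = \delta$, so the ``bad'' event has probability at most $\delta$, and taking complements yields the claim.

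The only genuine subtlety — the ``main obstacle,'' though it is minor — is verifying the lower bound $\epsilon/2$ on the probability mass of the window $[\alpha-\epsilon,\alpha+\epsilon]$ when $\alpha$ is near $\pm 1$ and when $\epsilon$ is not small; I would dispatch this by a short case split on whether $\alpha \geq 0$ or $\alpha < 0$, keeping only the half of the window that points toward the interior of $[-1,1]$, whose length is $\min(\epsilon, 1-|\alpha|) \geq$ ... and then noting that even in the worst case the retained mass is $\geq \epsilon/2$ because $U[-1,1]$ has density $1/2$ and the retained interior half has length $\geq \epsilon$ when $\epsilon \leq 1 - |\alpha|$, while if $\epsilon > 1-|\alpha|$ one instead keeps a full interior slab down to the center. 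A clean way to avoid all cases: the interval $[\alpha - \epsilon, \alpha+\epsilon]$ always contains either $[\alpha-\epsilon,\alpha]$ or $[\alpha,\alpha+\epsilon]$ entirely within $[-1,1]$ is not quite true either, so the honest statement is that its intersection with $[-1,1]$ has Lebesgue measure $\geq \epsilon$ whenever $\epsilon\in(0,1)$ — which holds because shifting a width-$2\epsilon$ interval centered in $[-1,1]$ can lose at most $\epsilon$ of its length off one end — giving $U$-mass $\geq \epsilon/2$. Everything else is routine.
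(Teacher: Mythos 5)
Your proposal is correct and follows essentially the same route as the paper: a single-sample bound $\mathbb{P}(|\alpha-X|\leq\epsilon)\geq\epsilon/2$ (the paper gets it via the half $\epsilon$-ball on the interior side, WLOG $\alpha\geq 0$, which is the clean version of your truncation argument), then independence, $(1-\epsilon/2)^n\leq e^{-n\epsilon/2}\leq\delta$. The only advice is to cut the hedging about the window near $\pm1$ and just state the half-ball observation directly; the rest is exactly the paper's computation.
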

\begin{proof}
We can assume $\alpha \geq 0$ without loss of generality.
By considering half $\epsilon$-ball of $\alpha$, we have
\begin{equation*}
\mathbb{P}_{X\sim U[-1,1]}\Big[\big|\alpha - X\big| \leq \epsilon\Big] \geq \frac{\epsilon}{2}.
\end{equation*}
Thus it follows that
\begin{equation*}
    \mathbb{P}_{X_1,\cdots,X_n\sim U[-1,1]}\Big[ \big|\alpha - X_i\big| > \epsilon \text{ for all } i \Big] \leq (1 - \frac{\epsilon}{2})^n \leq e^{-\frac{n\epsilon}{2}} \leq \delta.
\end{equation*}
\end{proof}

First, we consider to approximate a  single variable linear function $f(x) = wx: \mathbb{R}\to\mathbb{R}, w\in\mathbb{R}$ by some subnetwork of $2$-layered neural network $g(x) $ with $d$ hidden neurons, {\it without} the re-sampling assumption.
Note that this is the same setting as in Malach et al.~\cite{malach2020proving}, but we give another proof so that we can later extend it to the one with the resumpling assumption.
\begin{lemma}\label{app:lemma for 1-dim linear functino w/o iter rand}
Fix $\epsilon, \delta\in(0,1), w\in[-1,1], d\in\mathbb{N}$. Let ${\bf u}, {\bf v}\sim U[-1,1]^d$ be uniformly random weights of a $2$-layered neural network $g(x):= {\bf v}^{T}  \sigma(  {\bf u} \cdot x)$.
If $d \geq 2\lceil\frac{16}{\epsilon^2}\log(\frac{2}{\delta})\rceil$ holds, then with probability at least $1-\delta$,
\begin{equation}
    \big|wx - g_{\bf m}(x)  \big| \leq \epsilon, \text{ for all } x \in \mathbb{R}, |x| \leq 1,
\end{equation}
where $g_{\bf m}(x) := ({\bf v} \odot {\bf m})^T \sigma({\bf u} \cdot x)$ for some ${\bf m} \in \{0,1\}^{d}$.
\end{lemma}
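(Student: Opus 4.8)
The plan is to reduce the approximation of the linear function $wx$ on $[-1,1]$ to a pointwise approximation at the two ``extreme'' ReLU pieces, exactly mirroring the sketch already given for Lemma~\ref{lemma without iterative randomization}. First I would split the hidden layer: assume $d=2d'$ (if $d$ is odd, drop one neuron, which only weakens the bound by a harmless factor absorbed by the ceiling), and partition the index set into $I=\{1,\dots,d'\}$ and $J=\{d'+1,\dots,d\}$. The key algebraic identity is the splitting $wx = w\sigma(x) - w\sigma(-x)$ from Eq.~(\ref{our splitting}), valid for all $x\in\mathbb{R}$. So it suffices to find one index $i\in I$ with $(u_i,v_i)\approx(1,w)$ and one index $j\in J$ with $(u_j,v_j)\approx(-1,-w)$: then $v_i\sigma(u_i x)$ approximates $w\sigma(x)$ and $v_j\sigma(u_j x)$ approximates $-w\sigma(-x)$, uniformly for $|x|\le 1$, and the mask $\vect{m}$ selects exactly these two coordinates (all other entries of $\vect{m}$ set to $0$).

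The second step is the probabilistic estimate. For a fixed index $k$, the pair $(u_k,v_k)$ is uniform on $[-1,1]^2$, so $\mathbb{P}(|u_k-1|\le \tfrac{\epsilon}{2}\text{ and }|v_k-w|\le\tfrac{\epsilon}{2})\ge (\tfrac{\epsilon}{4})\cdot(\tfrac{\epsilon}{4}) = \tfrac{\epsilon^2}{16}$ — here each marginal event has probability at least $\epsilon/4$ because we only get half the $\epsilon/2$-interval when the center is at the boundary $\pm 1$ (this is the same boundary consideration as in Lemma~\ref{app:uniform sampling lemma}), and the two events are independent. Since the $d'$ indices in $I$ give independent trials, $\mathbb{P}(\text{no good }i\in I)\le (1-\tfrac{\epsilon^2}{16})^{d'}\le e^{-\epsilon^2 d'/16}$, which is at most $\delta/2$ once $d'\ge \tfrac{16}{\epsilon^2}\log(\tfrac{2}{\delta})$; symmetrically for $J$ using $(u_j,v_j)\approx(-1,-w)$. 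A union bound over the two events gives that, with probability at least $1-\delta$, both good indices exist.

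The third step is to convert the two pointwise inequalities into the uniform bound. Given $|u_i-1|\le\epsilon/2$ and $|v_i-w|\le\epsilon/2$, I would bound $|w\sigma(x)-v_i\sigma(u_i x)|$ for $|x|\le 1$ by adding and subtracting $w\sigma(u_i x)$: the first difference is $|w|\,|\sigma(x)-\sigma(u_i x)|\le |x|\,|1-u_i|\le\epsilon/2$ using $|w|\le 1$ and $1$-Lipschitzness of $\sigma$, and the second is $|w-v_i|\,|\sigma(u_i x)|\le (\epsilon/2)\,|u_i|\le\epsilon/2$ using $|u_i|\le 1$ — wait, $|u_i|$ could be up to $1+\epsilon/2$, so I would instead note $|\sigma(u_ix)|\le|u_i x|\le|u_i|\le 3/2$ and simply carry the resulting constant, or tighten the target radii from $\epsilon/2$ to a small enough constant multiple of $\epsilon$; the constant $16$ in the denominator already has slack for this. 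Doing the same for $J$ and summing the two gives $|wx-g_{\vect{m}}(x)|\le\epsilon$ for all $|x|\le 1$, using $g_{\vect{m}}(x)=v_i\sigma(u_i x)+v_j\sigma(u_j x)$ and the splitting identity. The main obstacle, such as it is, is bookkeeping with the boundary factors and the slightly-larger-than-$1$ values of $|u_k|$ so that the final constant matches the $16$ in the stated hypothesis $d\ge 2\lceil\tfrac{16}{\epsilon^2}\log(\tfrac{2}{\delta})\rceil$; the structure of the argument is otherwise entirely routine and identical to the sketch already displayed.
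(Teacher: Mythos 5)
Your argument is essentially the paper's own proof: the same splitting $wx=w\sigma(x)-w\sigma(-x)$, the same partition of the hidden units into $I$ and $J$, the same per-neuron success probability $\frac{\epsilon^2}{16}$ giving the failure bound $\left(1-\frac{\epsilon^2}{16}\right)^{d'}\le\frac{\delta}{2}$, and the same union bound, so the proposal is correct at the same level of rigor as the paper's sketch. Two small remarks: your worry that $|u_i|$ might exceed $1$ is unnecessary, since $u_i\sim U[-1,1]$ forces $|u_i|\le 1$; the genuine residual looseness is rather that each selected neuron only yields error at most $\epsilon$ (not $\frac{\epsilon}{2}$), so the combined bound is really $2\epsilon$ unless the radii and the constant $16$ are retuned --- but the paper's proof asserts the $\frac{\epsilon}{2}$ per-term bound with exactly the same unaddressed constant-factor gap, so this does not distinguish your argument from theirs.
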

\begin{proof}
The core idea is to decompose $wx$ as
\begin{equation}\label{app:relu equation}
    wx = w(\sigma(x) - \sigma(-x)) = w\sigma(x) - w\sigma(-x).
\end{equation}
We assume that $d$ is an even number as $d = 2d'$ so that
we can split an index set $\{1,\cdots,d\}$ of hidden neurons of $g(x)$ into $I = \{1,\cdots,d'\}$ and $J = \{d'+1, \cdots, d\}$.
Then we have the corresponding subnetworks $g_I(x)$ and $g_J(x)$  given by $g_I(x) := \sum_{k \in I} v_k \sigma(u_k x), g_J(x) := \sum_{k\in J} v_k \sigma(u_k x)$, which satisfy the equation $g(x) = g_I(x) + g_J(x)$.

From \eqtext~(\ref{app:relu equation}), it is enough to consider the probabilities for approximating $w\sigma(x)$ by a subnetwork of $g_I(x)$ and for approximating $-w\sigma(-x)$ by a subnetwork of $g_J(x)$. Now we have
\begin{eqnarray}
    \mathbb{P}\left( \not\exists i\in I \text{ s.t. } |u_{i} - 1| \leq \frac{\epsilon}{2}, |v_{i} - w| \leq \frac{\epsilon}{2} \right) \leq \left(1 - \frac{\epsilon^2}{16}\right)^{d'} \leq \frac{\delta}{2},\label{app:inequality for g_I}\\
    \mathbb{P}\left( \not\exists j\in J \text{ s.t. } |u_{j} + 1| \leq \frac{\epsilon}{2}, |v_{j} + w| \leq \frac{\epsilon}{2} \right) \leq \left(1 - \frac{\epsilon^2}{16}\right)^{d'} \leq \frac{\delta}{2},
\end{eqnarray}
for $d' \geq \frac{16}{\epsilon^2}\log\left(\frac{2}{\delta}\right)$ as well as in the proof of Lemma \ref{app:uniform sampling lemma}.
By using the union bound,  with probability at least $1-\delta$, we have $i\in I$ and $ j \in J$ such that
\begin{eqnarray*}
\big| w\sigma(x) - v_{i} \sigma(u_{i} x) \big| \leq \frac{\epsilon}{2},\\
\big|-w\sigma(-x) - v_{j} \sigma(u_{j} x) \big|\leq \frac{\epsilon}{2}.
\end{eqnarray*}
Combining these inequalities and \eqtext~(\ref{app:relu equation}), we finish the proof.
\end{proof}

Now we extend Lemma \ref{app:lemma for 1-dim linear functino w/o iter rand} to the one with the re-sampling assumption.
\begin{lemma}\label{app:lemma for 1-dim linear functino with iter rand}
Fix $\epsilon, \delta\in(0,1), w\in[-1,1], d\in\mathbb{N}$. Let ${\bf u}, {\bf v}\sim U[-1,1]^d$ be uniformly random weights of a $2$-layered neural network $g(x):= {\bf v}^{T}  \sigma(  {\bf u} \cdot x)$.
Let $R\in\mathbb{N}$ and we assume that each elements of $u$ and $v$ can be re-sampled with replacement up to $R-1$ times.
If $d \geq 2\lceil\frac{16}{\epsilon^2R^2}\log(\frac{2}{\delta})\rceil$ holds, then with probability at least $1-\delta$,
\begin{equation}
    \big|wx - g_{\bf m}(x)  \big| \leq \epsilon, \text{ for all } x \in \mathbb{R}, |x| \leq 1,
\end{equation}
where $g_{\bf m}(x) := ({\bf v} \odot {\bf m})^T \sigma({\bf u} \cdot x)$ for some ${\bf m} \in \{0,1\}^{d}$.
\end{lemma}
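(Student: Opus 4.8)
The plan is to mimic the proof of Lemma~\ref{app:lemma for 1-dim linear functino w/o iter rand}, replacing the single-index search over hidden neurons by a two-index search over the enlarged ``virtual'' index sets, and to exploit the re-sampling assumption to supply these extra virtual draws. First I would assume $d=2d'$ and split the hidden indices into $I=\{1,\dots,d'\}$ and $J=\{d'+1,\dots,d\}$, with the associated subnetwork decomposition $g=g_I+g_J$, exactly as before; by \eqtext~(\ref{app:relu equation}) it suffices to approximate $w\sigma(x)$ using a single neuron from $g_I$ and $-w\sigma(-x)$ using a single neuron from $g_J$. The point is that, for a neuron $k\in I$, we get $R$ independent draws of $u_k$ (the original value plus $R-1$ re-samplings) and, independently, $R$ draws of $v_k$; the re-sampling assumption lets us choose the mask $\vect{m}$ and the re-sampling schedule so that the value of $u_k$ that is ``live'' when $m_k$ is set differs from the value of $v_k$ that is live at that time. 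Concretely, I would introduce the projection map $\pi\colon\wt I\to I$, $k\mapsto\lceil k/R\rceil$, with $\wt I=\{1,\dots,d'R\}$, and regard the $d'R$ virtual draws $\wt u_1,\dots,\wt u_{d'R}\sim U[-1,1]$ as the pooled $u$-samples across all neurons in $I$, and similarly $\wt v_1,\dots,\wt v_{d'R}$ for $v$. A neuron $k=\pi(i_1)=\pi(i_2)$ can be made to realize the pair $(\wt u_{i_1},\wt v_{i_2})$ because we may re-sample $u_k$ until it equals $\wt u_{i_1}$, freeze it by setting $m_k=1$ appropriately, and independently re-sample $v_k$ until it equals $\wt v_{i_2}$.

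The key probabilistic step is then the analogue of \eqtext~(\ref{app:inequality for g_I}): I would bound
\begin{equation*}
\mathbb{P}\!\left(\not\exists\, i_1,i_2\in\wt I \text{ s.t. } \pi(i_1)=\pi(i_2),\ |\wt u_{i_1}-1|\le\tfrac{\epsilon}{2},\ |\wt v_{i_2}-w|\le\tfrac{\epsilon}{2}\right)
\le \left(1-\tfrac{\epsilon^2}{16}\right)^{d'R^2}\le\tfrac{\delta}{2},
\end{equation*}
using the independence of all $\wt u$'s and $\wt v$'s, the fact that $\#\{(i_1,i_2)\in\wt I\times\wt I:\pi(i_1)=\pi(i_2)\}=d'R^2$ (one factor of $R$ from the $u$-pool inside a fiber of $\pi$, one from the $v$-pool), the elementary estimate $\mathbb{P}[|{\cdot}-1|\le\epsilon/2]\ge\epsilon/4$ for $U[-1,1]$ combined to give per-pair success probability $\ge(\epsilon/4)^2=\epsilon^2/16$, and $1-t\le e^{-t}$; this forces the condition $d'\ge\frac{16}{\epsilon^2 R^2}\log(\frac{2}{\delta})$, hence $d\ge 2\lceil\frac{16}{\epsilon^2R^2}\log(\frac{2}{\delta})\rceil$. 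The same bound holds verbatim for $J$ with $1$ replaced by $-1$. A union bound over the two events then yields, with probability at least $1-\delta$, indices and re-sampling choices giving a neuron in $I$ whose live $(u,v)$ pair is $\epsilon/2$-close to $(1,w)$ and a neuron in $J$ whose live pair is $\epsilon/2$-close to $(-1,-w)$; taking $m_k=1$ exactly on these two neurons and $m_k=0$ elsewhere, the triangle inequality together with \eqtext~(\ref{app:relu equation}) and $|\sigma(x)-\sigma(y)|\le|x-y|$, $|x|\le1$ finishes the proof exactly as in Lemma~\ref{app:lemma for 1-dim linear functino w/o iter rand}.

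The main obstacle I anticipate is making rigorous the informal move from ``each coordinate can be re-sampled up to $R-1$ times'' to ``we have $d'R^2$ independent candidate pairs.'' One has to be careful that (i) the choice of which re-sampled value of $u_k$ to keep is made \emph{before} the mask is finalized, so that the value is genuinely a fresh $U[-1,1]$ draw and not conditioned on the event we are trying to hit; (ii) the re-samplings of $u_k$ and of $v_k$ are independent, so that within a single neuron $k$ we really do get $R$ independent $u$-options times $R$ independent $v$-options; and (iii) the mask/re-sampling policy realizing a given virtual pair does not interfere with the other neuron (the one in $J$), which is automatic since $I$ and $J$ are disjoint and their weights are independent. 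Once this bookkeeping is set up cleanly — essentially by defining the selection rule ``re-sample $u_k$ until it lands in the target interval, then stop and set $m_k=1$; independently re-sample $v_k$ until it lands in its target interval'' and observing that the existence of \emph{some} such successful $k$ is exactly the complement of the displayed bad event — the rest is routine and identical to the non-re-sampling case.
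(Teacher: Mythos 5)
Your proposal follows essentially the same route as the paper's own proof of this lemma: the same splitting via \eqtext~(\ref{app:relu equation}) into the $I$/$J$ halves, the same projection map $\pi\colon\wt I\to I$ pooling the $R$ virtual draws per neuron, the same count $\#\{(i_1,i_2):\pi(i_1)=\pi(i_2)\}=d'R^2$ feeding the bound $(1-\epsilon^2/16)^{d'R^2}\le\delta/2$, and the same union-bound-plus-triangle-inequality finish. Your extra discussion of the adaptive stopping rule that realizes a chosen virtual pair makes explicit bookkeeping that the paper leaves implicit, but the mathematical content is the same.
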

\begin{proof}
As in the proof of Lemma \ref{app:lemma for 1-dim linear functino w/o iter rand}, we assume that $d = 2d'$ and let $I = \{1,\cdots,d'\}, J = \{d'+1,\cdots,d\}$.
Now we consider $\wt{I} = \{1,\cdots, d'R\}$ and a projection $\pi: \wt{I} \to I$ defined by $\pi(k) = \lfloor (k-1)/R \rfloor+1$.
Since we assumed that each elements of ${\bf u}$ and ${\bf v}$ can be re-sampled up to $R-1$ times, we can replace the probability \eqtext~(\ref{app:inequality for g_I}) in the proof of Lemma \ref{app:lemma for 1-dim linear functino w/o iter rand} by
\begin{equation}\label{app:probability for g_I with iter rand}
    \mathbb{P}\left( \not\exists i_1,i_2\in \wt{I} \text{ s.t. } \pi(i_1)=\pi(i_2),\ \  |\wt{u}_{i_1} - 1| \leq \frac{\epsilon}{2}, \ \ |\wt{v}_{i_2} - w| \leq \frac{\epsilon}{2} \right),
\end{equation}
where $\wt{u}_1,\cdots,\wt{u}_{d'R}, \wt{v}_1,\cdots,\wt{v}_{d'R} \sim U[-1,1]$. Since we have
\begin{equation}\label{app:count pairs (i_1,i_2)}
    \#\{(i_1,i_2)\in\wt{I}\times\wt{I}: \pi(i_1)=\pi(i_2)\} = d'R^2,
\end{equation}
we can evaluate the probability $\eqtext~(\ref{app:probability for g_I with iter rand})$ as
\begin{equation*}
    \eqtext~(\ref{app:probability for g_I with iter rand}) \leq \left(1-\frac{\epsilon^2}{16}\right)^{d'R^2} \leq \frac{\delta}{2},
\end{equation*}
for $d' \geq \frac{16}{\epsilon^2R^2}\log\left(\frac{\delta}{2}\right)$. The rest of the proof is same as Lemma \ref{app:lemma for 1-dim linear functino w/o iter rand}.
\end{proof}

Then, we generalize the above lemma to the case which the target function $f(x)$ is a single-variable linear map with higher output dimensions. 
\begin{lemma}\label{app:lemma for single variable linear map}
Fix $\epsilon, \delta\in(0,1), d_1, d_2\in\mathbb{N}, {\bf w}\in[-1,1]^{d_2}$. Let ${\bf u}\sim U[-1,1]^{d_1}, V \sim U[-1,1]^{d_2 \times d_1}$ be uniformly random weights of a $2$-layered neural network $g(x):= V  \sigma(  {\bf u} \cdot x)$.
Let $R\in\mathbb{N}$ and we assume that each elements of ${\bf u}$ and $V$ can be re-sampled with replacement up to $R-1$ times.

If $d \geq 2\lceil \frac{16d_2}{\epsilon^2R^2}\log(\frac{2d_2}{\delta})\rceil$ holds, then with probability at least $1-\delta$,
\begin{equation}
    \lVert {\bf w}\cdot x - g_M(x)  \rVert_{2} \leq \epsilon, \text{ for all } x \in \mathbb{R}, |x| \leq 1,
\end{equation}
where $g_M(x) := (V \odot M)^T \sigma({\bf u} \cdot x)$ for some $M \in \{0,1\}^{d}$.
\end{lemma}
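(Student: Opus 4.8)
The plan is to reduce the statement to $d_2$ independent copies of the scalar result Lemma~\ref{app:lemma for 1-dim linear functino with iter rand} --- one for each output coordinate --- and then to aggregate the coordinatewise errors through $\lVert\cdot\rVert_2$. The observation that makes this lossless is that the mask acts on $V$ row by row, i.e.\ independently on each output coordinate: writing the $t$-th coordinate of $g_M(x)$ as $\sum_i M_{t,i}\,V_{t,i}\,\sigma(u_i x)$, the subnetworks used for different coordinates may reuse the same hidden units without conflict. Hence for each $t$ I only need the \emph{existence} of a suitable hidden unit, and never need distinct units for distinct coordinates, so no system-of-distinct-representatives or block partition is required and a plain union bound over the $d_2$ coordinates suffices (this is exactly why the required width ends up linear, not quadratic, in $d_2$). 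Here I read the stated width inequality as a condition on the hidden width $d_1$, take $M\in\{0,1\}^{d_2\times d_1}$, and, as in Lemma~\ref{app:lemma for 1-dim linear functino w/o iter rand}, do not prune $\mathbf u$, since zeroing the $i$-th column of $V$ already kills the contribution of hidden unit $i$.

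Concretely I would set the per-coordinate target accuracy to $\epsilon':=\epsilon/\sqrt{d_2}$ and the per-event failure budget to $\delta':=\delta/(2d_2)$ (one event for each of the $d_2$ coordinates and each of the two halves below), write $d_1=2d_1'$, and split the hidden units into $I$ and $J$ of size $d_1'$, as in the proofs of Lemmas~\ref{app:lemma for 1-dim linear functino w/o iter rand} and~\ref{app:lemma for 1-dim linear functino with iter rand}. Fixing a coordinate $t$ and using $w_t x = w_t\sigma(x)-w_t\sigma(-x)$ (\eqtext~(\ref{app:relu equation})), I would run the projection/re-sampling argument of Lemma~\ref{app:lemma for 1-dim linear functino with iter rand} with $(\mathbf u,\,V_{t,\cdot},\,w_t)$ playing the role of $(\mathbf u,\mathbf v,w)$: introducing $\wt I$ of size $d_1'R$ and the projection $\pi:\wt I\to I$, and using $\#\{(i_1,i_2)\in\wt I\times\wt I:\pi(i_1)=\pi(i_2)\}=d_1'R^2$ (\eqtext~(\ref{app:count pairs (i_1,i_2)})), the probability that $I$ contains, even after re-sampling up to $R-1$ times, no unit $i$ with $|u_i-1|\leq\epsilon'/2$ and $|V_{t,i}-w_t|\leq\epsilon'/2$ is at most $(1-(\epsilon')^2/16)^{d_1'R^2}\leq\delta'$ provided $d_1'\geq\frac{16}{(\epsilon')^2R^2}\log\frac{1}{\delta'}=\frac{16d_2}{\epsilon^2R^2}\log\frac{2d_2}{\delta}$; the symmetric estimate on $J$ handles the pair $(-1,-w_t)$.

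Then I would take a union bound over these $2d_2$ events, at total cost $2d_2\cdot\delta'=\delta$, so that with probability at least $1-\delta$ there exist, for every $t$, indices $i_t\in I$ and $j_t\in J$ with $u_{i_t},V_{t,i_t}$ within $\epsilon'/2$ of $1,w_t$ and $u_{j_t},V_{t,j_t}$ within $\epsilon'/2$ of $-1,-w_t$. Defining $M$ so that its $t$-th row keeps exactly the entries $(t,i_t)$ and $(t,j_t)$, the $t$-th coordinate of $g_M(x)$ equals $V_{t,i_t}\sigma(u_{i_t}x)+V_{t,j_t}\sigma(u_{j_t}x)$; since $u_{i_t}>0>u_{j_t}$ this is $V_{t,i_t}u_{i_t}\sigma(x)-V_{t,j_t}u_{j_t}\sigma(-x)$, and the same triangle-inequality estimate as in Lemma~\ref{app:lemma for 1-dim linear functino w/o iter rand} (using $|x|\leq1$) bounds its distance to $w_t x$ by $\epsilon'$. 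Summing the squares over $t=1,\dots,d_2$ gives $\lVert\mathbf w x-g_M(x)\rVert_2\leq\sqrt{d_2}\,\epsilon'=\epsilon$, and the width hypothesis is precisely $d_1=2d_1'\geq 2\lceil\frac{16d_2}{\epsilon^2R^2}\log\frac{2d_2}{\delta}\rceil$. I do not expect a genuine obstacle here: the entire content is the first paragraph's observation that row-wise masking turns the vector-valued problem into $d_2$ scalar existence statements that may simply be union-bounded, and the only care needed is bookkeeping the substitutions $\epsilon\mapsto\epsilon/\sqrt{d_2}$ and $\delta\mapsto\delta/(2d_2)$ so that the constant $16$ and the factor $\log(2d_2/\delta)$ come out exactly as stated.
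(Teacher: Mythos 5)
Your proposal is correct and follows essentially the same route as the paper's proof: the splitting $w_t x = w_t\sigma(x)-w_t\sigma(-x)$, a per-coordinate application of the projection/re-sampling argument with accuracy $\epsilon/(2\sqrt{d_2})$ and failure budget $\delta/(2d_2)$ using the count $d_1'R^2$, a union bound over the $2d_2$ events, and an $\ell_2$ aggregation over coordinates. Your explicit remark that the row-wise mask lets different output coordinates reuse hidden units without conflict is exactly the point the paper handles with its brief "the choice may not be unique" comment, so nothing is missing.
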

\begin{proof}
We denote $V = (V_{ki})_{1\leq k \leq d_2, 1 \leq i \leq d_1}$.
As in the proof of Lemma \ref{app:lemma for 1-dim linear functino w/o iter rand}, we assume $d_1 = 2d'_1$ and split the index set $\{1,\cdots,d_1\}$ into $I=\{1,\cdots,d'_1\}$, $J=\{d'_1+1,\cdots,d_1\}$.
Also we consider the corresponding subnetworks of $g(x)$:
\begin{equation*}
g_I(x) := \left(\sum_{i\in I} V_{ki}\sigma(u_i x)\right)_{1\leq k \leq d_2}, \ \ g_J(x) := \left(\sum_{i\in J} V_{ki}\sigma(u_i x)\right)_{1\leq k \leq d_2}.
\end{equation*}
Similar as the proof of Lemma \ref{app:lemma for 1-dim linear functino w/o iter rand} and Lemma \ref{app:lemma for 1-dim linear functino with iter rand}, it is enough to show that there probably exists a subnetwork of $g_I(x)$ which approximates ${\bf w} \cdot \sigma(x)$ , and also that there simultaneously exists a subnetwork of $g_J(x)$ which approximates $-{\bf w}\cdot \sigma(-x)$.

For simplicity, we focus on $g_I(x)$ in the following argument, but same conclusion holds for $g_J(x)$ as well. Fix $k \in \{1,\cdots,d_2\}$. Then we consider the following probability,
\begin{equation}\label{app:probability for g_I (linear map ver)}
    \mathbb{P}\left( \not\exists i_1,i_2\in \wt{I} \text{ s.t. } \pi(i_1)=\pi(i_2),\ \  |\wt{u}_{i_1} - 1| \leq \frac{\epsilon}{2\sqrt{d_2}}, \ \ |\wt{V}_{k,i_2} - w| \leq \frac{\epsilon}{2\sqrt{d_2}} \right),
\end{equation}
where $\wt{u}_i, \wt{V}_{ki} \sim U[-1,1]$ for $i=1,\cdots,d'R$.
By using \eqtext~(\ref{app:count pairs (i_1,i_2)}), if $d' \geq \frac{16d_2}{\epsilon^2R^2}\log\left( \frac{2d_2}{\delta} \right)$, we have
\begin{equation*}
    \eqtext~(\ref{app:probability for g_I (linear map ver)}) \leq \left( 1 - \frac{\epsilon^2}{16d_2} \right)^{d'R^2} \leq \frac{\delta}{2d_2} 
\end{equation*}
Therefore, by  the union bound over $k=1,\cdots,d_2$, we have $i_1,i_2\in\wt{I}$ for each $k$ such that $i :=\pi(i_1)=\pi(i_2)$,  $|\wt{u}_{i_1} - 1| \leq \frac{\epsilon}{2\sqrt{d_2}}$, $|\wt{V}_{k,i_2} - w_k| \leq \frac{\epsilon}{2\sqrt{d_2}}$, with probability at least $1-\delta$, and thus
\begin{equation}\label{app:approximate for each k}
    \big| w_k\sigma(x) - V_{ki} \sigma(u_i x) \big| \leq \frac{\epsilon}{2\sqrt{d_2}}, \ \ \text{ for } x\in \mathbb{R}, |x|\leq 1,
\end{equation}
if we substitute $\wt{u}_{i_1}$ for $u_i$, and $\wt{V}_{k,i_2}$ for $V_{ki}$.
We note that the choice of $\wt{u}_{i_1}$ and $\wt{V}_{k,i_2}$ may not be unique, but  \eqtext~(\ref{app:approximate for each k}) does not depend on these choice.

Therefore, by taking $M$ appropriately, we have
\begin{eqnarray*}
\lVert {\bf w} x - g_{M}(x) \rVert_2 &\leq& \lVert {\bf w}\sigma(x) - g_I(x) \rVert_2 + \lVert -{\bf w}\sigma(-x) - g_J(x) \rVert_2 \\
&\leq& \frac{\epsilon}{2} + \frac{\epsilon}{2} = \epsilon.
\end{eqnarray*}
for all $x\in\mathbb{R}$ with $|x| \leq 1$.
\end{proof}

Subsequently, we can generalize Lemma \ref{app:lemma for single variable linear map} to multiple variables version:
\begin{lemma}\label{app:lemma for linear map}
Fix $\epsilon, \delta\in(0,1)$, $d_0, d_1, d_2\in\mathbb{N}$, $W \in[-1,1]^{d_2\times d_0}$. Let $U \sim U[-1,1]^{d_1\times d_0}, V \sim U[-1,1]^{d_2 \times d_1}$ be uniformly random weights of a $2$-layered neural network $g({\bf x}):= V  \sigma(  U {\bf x})$.
Let $R\in\mathbb{N}$ and we assume that each elements of $U$ and $V$ can be re-sampled with replacement up to $R-1$ times.

If $d_1 \geq 2d_0 \lceil \frac{16 d_0^2d_2}{\epsilon^2R^2}\log(\frac{2d_0d_2}{\delta})\rceil$ holds, then with probability at least $1-\delta$,
\begin{equation}\label{app:inequality for linear map}
    \lVert W{\bf x} - g_{M,N}({\bf x})  \rVert_2 \leq \epsilon, \text{ for all } {\bf x} \in \mathbb{R}^{d_0}, \lVert x\rVert_\infty \leq 1,
\end{equation}
where $g_{M,N}(x) := (V \odot M)^T \sigma((U\odot N) \cdot x)$ for some $M \in \{0,1\}^{d_2\times d_1}$, $N\in \{0,1\}^{d_1\times d_0}$.
\end{lemma}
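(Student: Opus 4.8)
The plan is to reduce the multi-input case to the single-input Lemma~\ref{app:lemma for single variable linear map} by decomposing $W{\bf x}$ along input coordinates and assigning a dedicated block of hidden neurons to each coordinate. Writing $W=({\bf w}_1\mid\cdots\mid{\bf w}_{d_0})$ by columns, we have $W{\bf x}=\sum_{j=1}^{d_0}{\bf w}_j x_j$ with each ${\bf w}_j\in[-1,1]^{d_2}$ (this uses the entrywise bound on $W$, which is what the scalar lemma needs for its target), so it suffices to approximate each rank-one piece ${\bf w}_j x_j$ to accuracy $\epsilon/d_0$ and sum the errors via the triangle inequality.

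Concretely, I would set $d_1':=\lceil\frac{16 d_0^2 d_2}{\epsilon^2 R^2}\log(\frac{2 d_0 d_2}{\delta})\rceil$, so the hypothesis $d_1\ge 2d_0 d_1'$ lets us pick $d_0$ pairwise disjoint blocks $B_1,\dots,B_{d_0}\subseteq\{1,\dots,d_1\}$ with $|B_j|=2d_1'$. I would then force the first-layer mask $N$ to connect a hidden neuron $i\in B_j$ only to input coordinate $j$ (i.e.\ $N_{ik}=1$ iff $i\in B_k$, and $N_{ik}=0$ for $i$ in no block); then the part of $g_{M,N}$ supported on $B_j$ is a function of $x_j$ alone and is exactly a subnetwork of a width-$2d_1'$ two-layer net with scalar input, random first-layer weights $(U_{ij})_{i\in B_j}$, and random second-layer weights $(V_{ki})_{k,\,i\in B_j}$, with target ${\bf w}_j$. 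Crucially, the entries of $U$ and $V$ touched by distinct blocks are disjoint, so the blocks are probabilistically independent and each still enjoys the full budget of $R-1$ re-samplings; this is what lets Lemma~\ref{app:lemma for single variable linear map} be applied blockwise without interference.

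Applying Lemma~\ref{app:lemma for single variable linear map} to each $B_j$ with the substitutions $\epsilon\mapsto\epsilon/d_0$ and $\delta\mapsto\delta/d_0$, the width requirement becomes $2\lceil\frac{16 d_2 d_0^2}{\epsilon^2 R^2}\log(\frac{2 d_0 d_2}{\delta})\rceil=2d_1'$, which matches $|B_j|$ by construction. So, with probability at least $1-\delta/d_0$, there is a second-layer mask $M^{(j)}$ on $B_j$ approximating ${\bf w}_j x_j$ within $\epsilon/d_0$ in the $\ell_2$-norm for all $|x_j|\le 1$. A union bound over $j=1,\dots,d_0$ makes all of these hold simultaneously with probability $\ge 1-\delta$; taking $M$ to agree with $M^{(j)}$ on each $B_j$ and to be zero on the remaining hidden neurons, the triangle inequality gives $\lVert W{\bf x}-g_{M,N}({\bf x})\rVert_2\le d_0\cdot(\epsilon/d_0)=\epsilon$ on $\lVert{\bf x}\rVert_\infty\le 1$, as required.

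I do not expect a serious obstacle: the argument is essentially the blockwise decomposition that turns a $d_0$-input problem into $d_0$ independent scalar problems, and the only real care needed is (i) choosing the mask $N$ so each block sees exactly one input, and (ii) bookkeeping the $\epsilon/d_0$ and $\delta/d_0$ rescalings so that the per-block width requirement lines up exactly with the claimed bound on $d_1$. The one point worth double-checking is that distinct blocks genuinely use disjoint weight entries — they do, since the blocks partition the hidden indices and block $j$ uses only column $j$ of $U$ and the $B_j$-columns of $V$ — so that independence, and hence both the clean union bound and the preservation of the re-sampling budget, is legitimate.
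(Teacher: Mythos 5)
Your proposal is correct and follows essentially the same route as the paper's proof: decompose $W{\bf x}=\sum_j {\bf w}_j x_j$, choose the first-layer mask $N$ so that disjoint blocks of hidden neurons each see a single input coordinate, apply Lemma~\ref{app:lemma for single variable linear map} blockwise with $\epsilon/d_0$ and $\delta/d_0$, and finish with a union bound and the triangle inequality. The only (harmless) difference is bookkeeping: the paper assumes $d_1/d_0\in\mathbb{N}$ and uses a block-diagonal $N$, whereas you select disjoint blocks of size $2\lceil\frac{16 d_0^2 d_2}{\epsilon^2R^2}\log(\frac{2d_0d_2}{\delta})\rceil$ and zero out any leftover neurons.
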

\begin{proof}
Let $d'_1 =  d_1 / d_0$, and we assume $d'_1 \in \mathbb{N}$.
We take $N$ as the following binary matrix:
\begin{equation*}
N = 
\begin{pmatrix}
     \mbox{\large {\bf 1}} & & \mbox{\large 0} \\
     & \ddots & \\
     \mbox{\large 0} & & \mbox{\large {\bf 1}}
\end{pmatrix},
\text{ where } \mbox{\large{\bf 1}} =
\begin{pmatrix}
1 \\
\vdots \\
1 \\
\end{pmatrix}
\in \mathbb{R}^{d'_1\times 1}
\end{equation*}
By the decomposition $U \odot N = {\bf u}_1\oplus\cdots\oplus {\bf u}_{d_0}$, where each ${\bf u}_i$ is a $d'_1\times 1$-matrix, we have
\begin{equation}\label{app:decomposition 1}
    g_{M,N}({\bf x}) = (V\odot M)^T \big(\sigma({\bf u}_1 x_1) \oplus \cdots \oplus \sigma({\bf u}_{d_0}x_{d_0})\big).
\end{equation}
Here, we denote $M$ as follows:
\begin{equation*}
    M = \begin{pmatrix}
     & &  \\
    M_{1} & \cdots & M_{d_0}\\
    & & 
    \end{pmatrix},
\end{equation*}
where each $M_i$ is a $d_2 \times d'_1$-matrix with binary coefficients.
Then we have 
\begin{equation}\label{app:decomposition 2}
    V\odot M = (V_1\odot M_1) + \cdots + (V_{d_0}\odot M_{d_0}), \ \  V_i \in \mathbb{R}^{d_2\times d'_1}.
\end{equation}
By combining \eqtext~(\ref{app:decomposition 1}) and \eqtext~(\ref{app:decomposition 2}), we have
\begin{equation}\label{app:decomposition 3}
    g_{M,N}({\bf x}) = \sum_{1\leq i \leq d_0} (V_i \odot M_i)\sigma({\bf u}_i x_i).
\end{equation}
Applying Lemma \ref{app:lemma for single variable linear map} to each independent summands in \eqtext~(\ref{app:decomposition 3}), with probability at least $1-\frac{\delta}{d_0}$, there exists $M_i$ for fixed $i\in\{1,\cdots,d_0\}$ such that
\begin{equation}\label{app:approximation 1}
    \lVert {\bf w}_i x_i - (V_i \odot M_i)\sigma({\bf u}_i x_i) \rVert_2 \leq \frac{\epsilon}{d_0}, \text{ for } x_i\in\mathbb{R}, |x_i|\leq 1,
\end{equation}
where ${\bf w}_i$ is the $i$-th column vector of $W$.

Using the union bound, we have $M_1,\cdots,M_{d_0}$ satisfying \eqtext~(\ref{app:approximation 1}) simultaneously with probability at least $1-\delta$. Therefore, by combining \eqtext~(\ref{app:decomposition 3}), \eqtext~(\ref{app:approximation 1}) and the decomposition $W{\bf x} = \sum_j {\bf w}_j x_j$, we obtain \eqtext~(\ref{app:inequality for linear map}).
\end{proof}

Finally, by using Lemma \ref{app:lemma for linear map}, we prove Theorem \ref{app:main theorem}. The outline of the proof is same as prior works \cite{malach2020proving}\cite{pensia2020optimal}.
\begin{proof}[Proof of Theorem \ref{app:main theorem}]
By Lemma \ref{app:lemma for linear map}, for each fixed $k \in \{1,\cdots,l\}$, we know that there exists binary matrices $M_{2k-1}, M_{2k}$ such that
\begin{equation}\label{app:approximation 2}
        \lVert F_k{\bf x} - (G_{2k}\odot M_{2k})\sigma\big((G_{2k-1}\odot M_{2k-1}){\bf x}\big)  \rVert_{2} \leq \frac{\epsilon}{2l},
\end{equation}
for all ${\bf x} \in \mathbb{R}^{d_0}$ with $\lVert {\bf x}\rVert_\infty \leq 1$, with probability at least $1 - \frac{\delta}{l}$. Taking the union bound, we can get $M=(M_1,\cdots,M_{2l})$ satisfying \eqtext~(\ref{app:approximation 2}) for all $k = 1,\cdots,l$ with probability at least $1-\delta$.

For the above $M$ and any ${\bf x}_0 \in \mathbb{R}^{d_0}$ with $\lVert {\bf x}_0 \rVert_\infty \leq 1$, we define sequences $\{{\bf x}_k\}_{0\leq k\leq l}$ and $\{\wt{{\bf x}}_k\}_{0\leq k\leq l}$ as 
\begin{eqnarray*}
    &&{\bf x}_k := f_{k}({\bf x}_{k-1}), \\
    &&\wt{\bf x}_0 := {\bf x}_0, \ \ \ \wt{\bf x}_k := g_{M,k}(\wt{\bf x}_{k-1}),
\end{eqnarray*}
where $f_k({\bf x})$ and $g_{M,k}({\bf x})$ are given by
\begin{eqnarray*}
f_{k}({\bf x}) := \begin{cases}
    \sigma\big(F_{k}{\bf x}\big),  & (1 \leq k \leq l-1) \\
    F_{l}{\bf x}, & (k=l)
\end{cases}
\end{eqnarray*}
\begin{eqnarray*}
g_{M,k}({\bf x}) := \begin{cases}
    \sigma\big((G_{2k}\odot M_{2k})\sigma\big((G_{2k-1}\odot M_{2k-1}){\bf x}\big)\big),  & (1 \leq k \leq l-1) \\
    (G_{2l}\odot M_{2l})\sigma\big((G_{2l-1}\odot M_{2l-1}){\bf x}\big). & (k=l)
\end{cases}
\end{eqnarray*}

By induction on $k\in\{0,\cdots,l\}$, we can show that
\begin{equation}\label{app:final approximation}
    \lVert {\bf x}_k - \wt{\bf x}_k \rVert_2 \leq \frac{k\epsilon}{l}, \ \ \ \lVert \wt{\bf x}_k \rVert_\infty \leq 2.
\end{equation}
For $k = 1$, this is trivial by definition. Consider the $k > 1$ case.
First of all, we remark that the following inequality is obtained by \eqtext~(\ref{app:approximation 2}) and the $1$-Lipschitz property of ReLU function $\sigma$:
\begin{eqnarray*}
\lVert f_k({\bf x}) - g_{M,k}({\bf x}) \rVert_2 \leq \frac{\epsilon}{2l}, \ \text{ for } {\bf x}\in\mathbb{R}^{d_0},\lVert {\bf x} \rVert_\infty \leq 1.
\end{eqnarray*}
By scaling ${\bf x}$, the above inequality can be rewritten as follows:
\begin{eqnarray*}
\lVert f_k({\bf x}) - g_{M,k}({\bf x}) \rVert_2 \leq \frac{\epsilon}{l}, \ \text{ for } {\bf x}\in\mathbb{R}^{d_0},\lVert {\bf x} \rVert_\infty \leq 2.
\end{eqnarray*}
Then, we have
\begin{eqnarray*}
\lVert {\bf x}_k - \wt{\bf x}_k \rVert_2
&=& \lVert f_k({\bf x}_{k-1}) - g_{M,k}(\wt{\bf x}_{k-1}) \rVert_2 \\
&\leq& \lVert f_k({\bf x}_{k-1}) - f_k(\wt{\bf x}_{k-1}) + f_k(\wt{\bf x}_{k-1}) - g_{M,k}(\wt{\bf x}_{k-1}) \rVert_2 \\
&\leq& \lVert f_k({\bf x}_{k-1}) - f_k(\wt{\bf x}_{k-1}) \rVert_2 + \lVert f_k(\wt{\bf x}_{k-1}) - g_{M,k}(\wt{\bf x}_{k-1}) \rVert_2 \\
&\leq& \lVert F_k \rVert_{\frob} \cdot\lVert {\bf x}_{k-1} - \wt{\bf x}_{k-1}\rVert_2 + \frac{\epsilon}{l} \\
&\leq& \frac{k\epsilon}{l} \ \ \ \ \ \ \text{(by the induction hypothesis)},
\end{eqnarray*}
and $\lVert \wt{\bf x}_k \rVert_\infty \leq \lVert \wt{\bf x}_k \rVert_2 \leq \lVert {\bf x}_k \rVert_2 + \lVert {\bf x}_k - \wt{\bf x}_k \rVert_2 \leq 1 + \frac{k\epsilon}{l} \leq 2$.

In particular, for $k = l$, \eqtext~(\ref{app:final approximation}) is nothing but \eqtext~(\ref{app:main inequality}).
\end{proof}

\section{Details for our experiments}\label{section:appendix-details}

\subsection{Network architectures}

In our experiments on CIFAR-10 and ImageNet, we used the following network architectures: Conv6, ResNet18, ResNet34, ResNet50, and ResNet101.
In Table~\ref{app:table:architectures for cifar10} (for CIFAR-10) and Table~\ref{app:table:architectures for imagenet} (for ImageNet), we describe their configurations with a width factor $\rho \in \mathbb{R}_{>0}$.
When $\rho=1.0$, the architectures are standard ones.

For each ResNet network, the bracket $[\cdots]$ represents the basic block for ResNet18 and ResNet34, and the bottleneck block for ResNet50 and ResNet101, following the original settings by He et al. \cite{he2016deep}.
We have a batch
normalization layer right after each convolution operation as well.
Note that, when we train and evaluate these networks with IteRand or edge-popup \cite{ramanujan2020what}, we replace the batch normalization to the non-affine one, which fixes its all learnable multipliers to $1$ and all learnable bias terms to $0$ following the design by Ramanujan et al. \cite{ramanujan2020what}.

\begin{table}[h]
  \caption{Network Architectures for CIFAR-10. ($\rho$: a width factor)}
  \label{app:table:architectures for cifar10}
  \centering
  \begin{tabular}{lccc}
    \toprule
    \diagbox{Layer}{Network}
         &  Conv6     & ResNet18 & ResNet34 \\
    \midrule
    \multirow{4}{*}{$3\times3$ Convolution}
    & $64\rho$, $64\rho$, $\maxpool$
    & $64$, $[64\rho, 64\rho] \times 2$
    & $64$, $[64\rho, 64\rho] \times 3$
    \\
    \multirow{4}{*}{\& Pooling Layers}
     & $128\rho$, $128\rho$, $\maxpool$
     & $[128\rho, 128\rho] \times 2$
     & $[128\rho, 128\rho] \times 4$
     \\
     & $256\rho$, $256\rho$, $\maxpool$ 
     & $[256\rho, 256\rho] \times 2$ 
     & $[256\rho, 256\rho] \times 6$ 
     \\
     & 
     & $[512\rho, 512\rho] \times 2$
     & $[512\rho, 512\rho] \times 3$
     \\
     & 
     & $\avgpool$
     & $\avgpool$
     \\
    \midrule
    Linear Layers 
    & $256\rho$, $256\rho$, $10$
    & $10$
    & $10$
    \\
    \bottomrule
  \end{tabular}
\end{table}

\begin{table}[h]
  \caption{Network Architectures for ImageNet. ($\rho$: a width factor)}
  \label{app:table:architectures for imagenet}
  \centering
  \resizebox{\columnwidth}{!}{
  \begin{tabular}{lcccc}
    \toprule
    \diagbox{Layer}{Network}
    &  ResNet18 & ResNet34 & ResNet50 & ResNet101
    \\
    \midrule
    Convolution
    & \multicolumn{4}{c}{$64 \ (7\times 7, \text{ stride } 2)$}
    \\
    \midrule
    Pooling
    & \multicolumn{4}{c}{$\maxpool\ (3\times 3, \text{ stride } 2, \text{ padding } 1)$}
    \\
    \midrule
    \multirow{3}{*}{Convolution}
    & $[64\rho, 64\rho] \times 2$
    & $[64\rho, 64\rho] \times 3$
    & $[64\rho, 64\rho, 256\rho] \times 3$
    & $[64\rho, 64\rho, 256\rho] \times 3$
    \\
    \multirow{3}{*}{Blocks}
     & $[128\rho, 128\rho] \times 2$
     & $[128\rho, 128\rho] \times 4$
     & $[128\rho, 128\rho, 512\rho] \times 4$
     & $[128\rho, 128\rho, 512\rho] \times 4$
     \\
     & $[256\rho, 256\rho] \times 2$ 
     & $[256\rho, 256\rho] \times 6$ 
     & $[256\rho, 256\rho, 1024\rho] \times 6$ 
     & $[256\rho, 256\rho, 1024\rho] \times 23$ 
     \\
     & $[512\rho, 512\rho] \times 2$
     & $[512\rho, 512\rho] \times 3$
     & $[512\rho, 512\rho, 2048\rho] \times 3$
     & $[512\rho, 512\rho, 2048\rho] \times 3$
     \\
    \midrule
    Pooling
    & \multicolumn{4}{c}{$\avgpool\ (7\times 7)$}
     \\
    \midrule
    Linear  
    & \multicolumn{4}{c}{$1000$}
    \\
    \bottomrule
  \end{tabular}
  }
\end{table}

\subsection{Hyperparameters}
In our experiments, we trained several neural networks by three methods (SGD, edge-popup \cite{ramanujan2020what}, and IteRand) on two datasets (CIFAR-10 and ImageNet).
For each dataset, we adopted different hyperparameters as follows.

\paragraph{CIFAR-10 experiments.}

\begin{itemize}
    \item {\bf SGD: }
    We used SGD with momentum for the optimization.
    It has the following hyperparameters: a total epoch number $E$, batch size $B$, learning rate $\eta$, weight decay $\lambda$, momentum coefficient $\mu$.
    For all network architectures, we used common values except for the learning rate and weight decay: $E=100$, $B=128$, and $\mu=0.9$.
    For the learning rate and weight decay, we used $\eta=0.01, \lambda=1.0\times10^{-4}$ for Conv6, $\eta=0.1, \lambda=5.0\times10^{-4}$ for ResNet18 and ResNet34, following Ramanujan et al. \cite{ramanujan2020what}.
    Moreover, we decayed the learning rate by cosine annealing \cite{loshchilov2017sgdr}.
    \item {\bf edge-popup: }
    With the same notation in Section~\ref{background section}, edge-popup has the same hyperparameters as SGD and an additional one, a sparsity rate $p$.
    We used the same values as SGD for each network except for the learning rate and the sparsity rate.
    For the learning rate, we used $\eta=0.2$ for Conv6, and $\eta=0.1$ for ResNet18 and ResNet34.
    We decayed the learning rate by cosine annealing, same as SGD.
    For the sparsity rate, we used $p=0.5$ for Conv6, and $p=0.6$ for ResNet18 and ResNet34.
    \item {\bf IteRand: }
    With the notation in Section~\ref{section:methodology}, IteRand has the same hypeparameters as edge-popup and the following additional ones: a randomization period $K_\period \in \mathbb{N}_{\geq 1}$ and a sampling rate $r \in [0,1]$ for partial randomization.
    We used the same values as edge-popup for the former hyperparameters, and $K_\period = 300, r=0.1$ for the latter ones.
\end{itemize}

\paragraph{ImageNet experiments.}
\begin{itemize}
    \item {\bf SGD: }
    For all network architectures, we used the following hyperparameters (except for the learning rate): $E=105, B=128, \lambda=1.0\times10^{-4}$, and $\mu=0.9$.
    For the first $5$ epochs, we gradually increased the learning rate as $\eta = 0.1 \times (i/5)$ for each $i$-th epoch ($i=1,\cdots,5$).
    For the next $95$ epochs, we decayed the learning rate by cosine annealing starting from $\eta = 0.1$.
    For the final $5$ epochs, we set the learning rate $\eta = 1.0 \times 10^{-5}$ to ensure the optimization converges.

    \item {\bf edge-popup: }
    For all network architectures, we used the same hyperparameters as SGD and the sparsity rate $p=0.7$.

    \item {\bf IteRand: }
    For all network architectures, we used the same hyperparameters as edge-popup and $K_\period = 1000, r=0.1$.
\end{itemize}

\subsection{Experimental results in table forms}

In Table \ref{app:table:results of figure2}, we give means $\pm$ one standard deviations which are plotted in Figure~\ref{figure: varying width} in Section~\ref{section:experiments}.
\begin{table}[h]
  \caption{Results for Figure~\ref{figure: varying width} in Section~\ref{section:experiments}}
  \label{app:table:results of figure2}
  \centering
  \begin{tabular}{lcccccc}
    \toprule
        Network
         & Method
         & $\mathcal{D}_\param$ 
         & $\rho=0.25$
         & $\rho=0.5$
         & $\rho=1.0$
         & $\rho=2.0$
         \\
    \midrule
    \multirow{6}{*}{Conv6}
    & \multirow{2}{*}{IteRand}
    & SC
    & $75.16 \pm 0.23$
    & $84.31 \pm 0.13$
    & $88.80 \pm 0.20$
    & $90.89 \pm 0.17$
    \\
    &
    & KU
    & $73.90 \pm 0.47$
    & $83.18 \pm 0.38$
    & $88.20 \pm 0.38$
    & $90.53 \pm 0.28$
    \\
    \cmidrule{2-7}
    & \multirow{2}{*}{edge-popup}
    & SC
    & $70.35 \pm 1.16$
    & $81.54 \pm 0.11$
    & $87.60 \pm 0.11$
    & $90.25 \pm 0.06$
    \\
    &
    & KU
    & $67.02 \pm 1.14$
    & $79.37 \pm 0.23$
    & $86.14 \pm 0.37$
    & $89.91 \pm 0.23$
    \\
    \cmidrule{2-7}
    & \multirow{1}{*}{SGD}
    & KU
    & $79.51 \pm 0.88$
    & $84.46 \pm 0.34$
    & $87.69 \pm 0.36$
    & $89.63 \pm 0.22$
    \\
    \midrule
    \multirow{6}{*}{ResNet18}
    & \multirow{2}{*}{IteRand}
    & SC
    & $86.09 \pm 0.24$
    & $90.50 \pm 0.36$
    & $92.61 \pm 0.17$
    & $93.82 \pm 0.15$
    \\
    &
    & KU
    & $84.71 \pm 0.42$
    & $89.96 \pm 0.06$
    & $92.47 \pm 0.16$
    & $93.52 \pm 0.16$
    \\
    \cmidrule{2-7}
    & \multirow{2}{*}{edge-popup}
    & SC
    & $84.23 \pm 0.33$
    & $89.61 \pm 0.06$
    & $92.29 \pm 0.04$
    & $93.57 \pm 0.13$
    \\
    &
    & KU
    & $81.13 \pm 0.09$
    & $88.45 \pm 0.53$
    & $91.79 \pm 0.19$
    & $93.25 \pm 0.06$
    \\
    \cmidrule{2-7}
    & \multirow{1}{*}{SGD}
    & KU
    & $90.99 \pm 0.29$
    & $93.10 \pm 0.14$
    & $94.43 \pm 0.05$
    & $95.02 \pm 0.23$
    \\
    \midrule
    \multirow{6}{*}{ResNet34}
    & \multirow{2}{*}{IteRand}
    & SC
& $88.26 \pm 0.35$
& $91.87 \pm 0.17$
& $93.54 \pm 0.27$
& $94.25 \pm 0.15$
\\
    &
    & KU
& $87.36 \pm 0.14$
& $91.58 \pm 0.16$
& $93.27 \pm 0.19$
& $94.05 \pm 0.10$
    \\
    \cmidrule{2-7}
    & \multirow{2}{*}{edge-popup}
    & SC
& $87.37 \pm 0.30$
& $91.41 \pm 0.26$
& $93.27 \pm 0.09$
& $94.25 \pm 0.13$
    \\
    &
    & KU
& $85.31 \pm 0.66$
& $90.85 \pm 0.11$
& $93.00 \pm 0.11$
& $93.96 \pm 0.10$
    \\
    \cmidrule{2-7}
    & \multirow{1}{*}{SGD}
    & KU
& $92.26 \pm 0.12$
& $94.20 \pm 0.25$
& $94.66 \pm 0.18$
& $95.33 \pm 0.10$
    \\
    \bottomrule
  \end{tabular}
\end{table}

In Table \ref{app:table:results of figure3}, we give means $\pm$ one standard deviations which are plotted in Figure\ref{figure: imagenet experiments} in Section~\ref{section:experiments}.
\begin{table}[h]
  \caption{Results for Figure~\ref{figure: imagenet experiments} in Section~\ref{section:experiments}}
  \label{app:table:results of figure3}
  \centering
  \begin{tabular}{lccccc}
    \toprule
        Network
         & Method
         & $\mathcal{D}_\param$ 
         & Accuracy (Top-1)
         & Sparsity (\%)
         & \# of Params
         \\
    \midrule
    \multirow{1}{*}{ResNet18}
    & SGD
    & KU
    & $69.89$
    & $0\%$
    & $11.16$M
    \\
    \midrule
    \multirow{1}{*}{ResNet34}
    & \multirow{1}{*}{SGD}
    & KU
    & $73.82$
    & $0\%$
    & $21.26$M
    \\
    \midrule
    \multirow{2}{*}{ResNet34}
    & IteRand
    & SC
    & $65.86$
    & $70\%$
    & $6.38$M
    \\
    & edge-popup
    & SC
    & $62.14$
    & $70\%$
    & $6.38$M
    \\
    \midrule
    \multirow{2}{*}{ResNet50}
    & IteRand
    & SC
    & $69.19$
    & $70\%$
    & $7.04$M
    \\
    & edge-popup
    & SC
    & $67.11$
    & $70\%$
    & $7.04$M
    \\
    \midrule
    \multirow{2}{*}{ResNet101}
    & IteRand
    & SC
    & $72.99$
    & $70\%$
    & $12.72$M
    \\
    & edge-popup
    & SC
    & $71.85$
    & $70\%$
    & $12.72$M
    \\
    \bottomrule
  \end{tabular}
\end{table}

\section{Additional experiments}
\subsection{Ablation study on hyperparameters:  \texorpdfstring{$K_\period$}{K_per} and \texorpdfstring{$r$}{r}}

IteRand has two hyperparameters: $K_\period$ (see line 5 in Algorithm~\ref{algorithm:iterand}) and $r$ (see Eq.~(\ref{partial randomization}) in Section~\ref{section:methodology}).

$K_\period$ controls the frequency of randomizing operations during the optimization.
Note that, in our experiments in Section~\ref{section:experiments}, we fixed it to $K_\period = 300$ on CIFAR-10, which is nearly $1$ epoch (= $351$ iterations) when the batch size is $128$.
As we discussed in Section~\ref{section:methodology}, 
too small $K_\period$ may degrade the performance because it may randomize even the important weights before their scores are well-optimized.
In contrast, too large $K_\period$ makes IteRand almost same as edge-popup, and thus the effect of the randomization disappears.
Figure \ref{app:figure:Ablation Study on K_per} shows this phenomenon with varying $K_\period$ in $\{1,30,300,3000,30000\}$.
In relation to our theoretical results (Theorem \ref{app:main theorem}),
we note that the expected number (here we denote $R'$) of randomizing operations for each weight is in inverse proportion to $K_\period$.
The theoretical results imply that greater $R'$ leads to better approximation ability of IteRand.
We observe that the results in Figure~\ref{app:figure:Ablation Study on K_per} are consistent with this implication, in the region where $K_\period$ is not too small ($K_\period \geq 300$).

\begin{figure}[htb]
  \centering
  \begin{subfigure}{0.49\textwidth}
    \centering
    \includegraphics[width=0.95\textwidth]{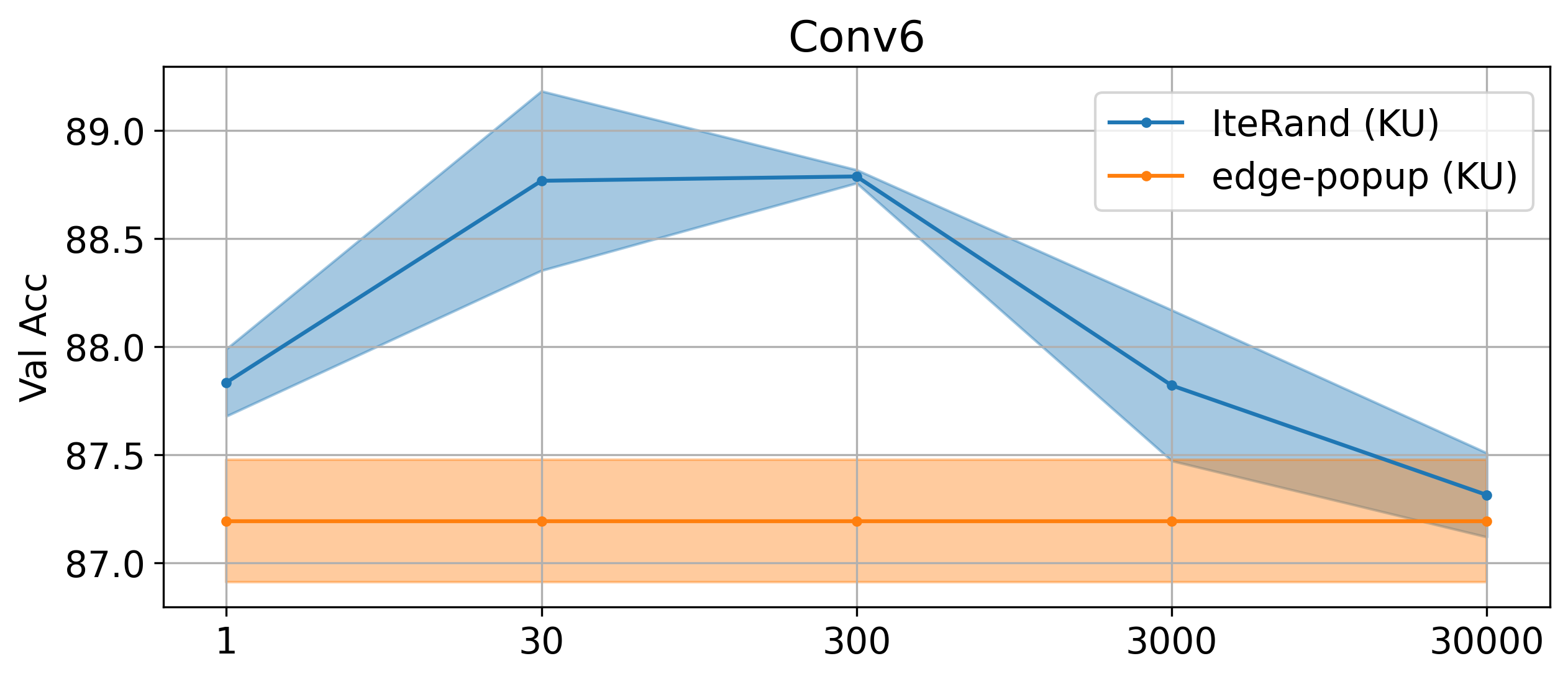}
  \caption{Conv6}
  \label{app:subfigure:K_per for Conv6}
  \end{subfigure}
  \begin{subfigure}{0.49\textwidth}
    \centering
    \includegraphics[width=0.95\textwidth]{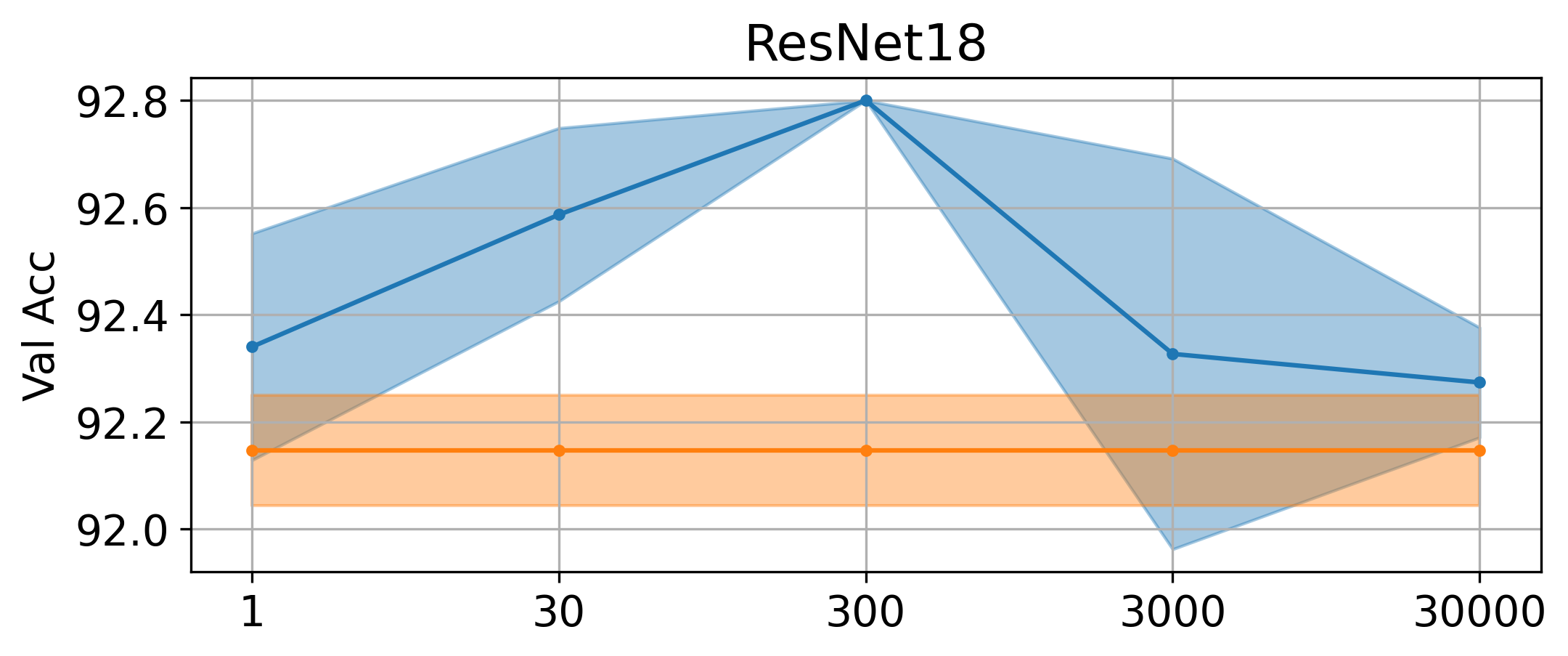}
  \caption{ResNet18}
  \label{app:subfigure:K_per for ResNet18}
  \end{subfigure}
  \\
  \begin{subfigure}{0.49\textwidth}
    \centering
    \includegraphics[width=0.98\textwidth]{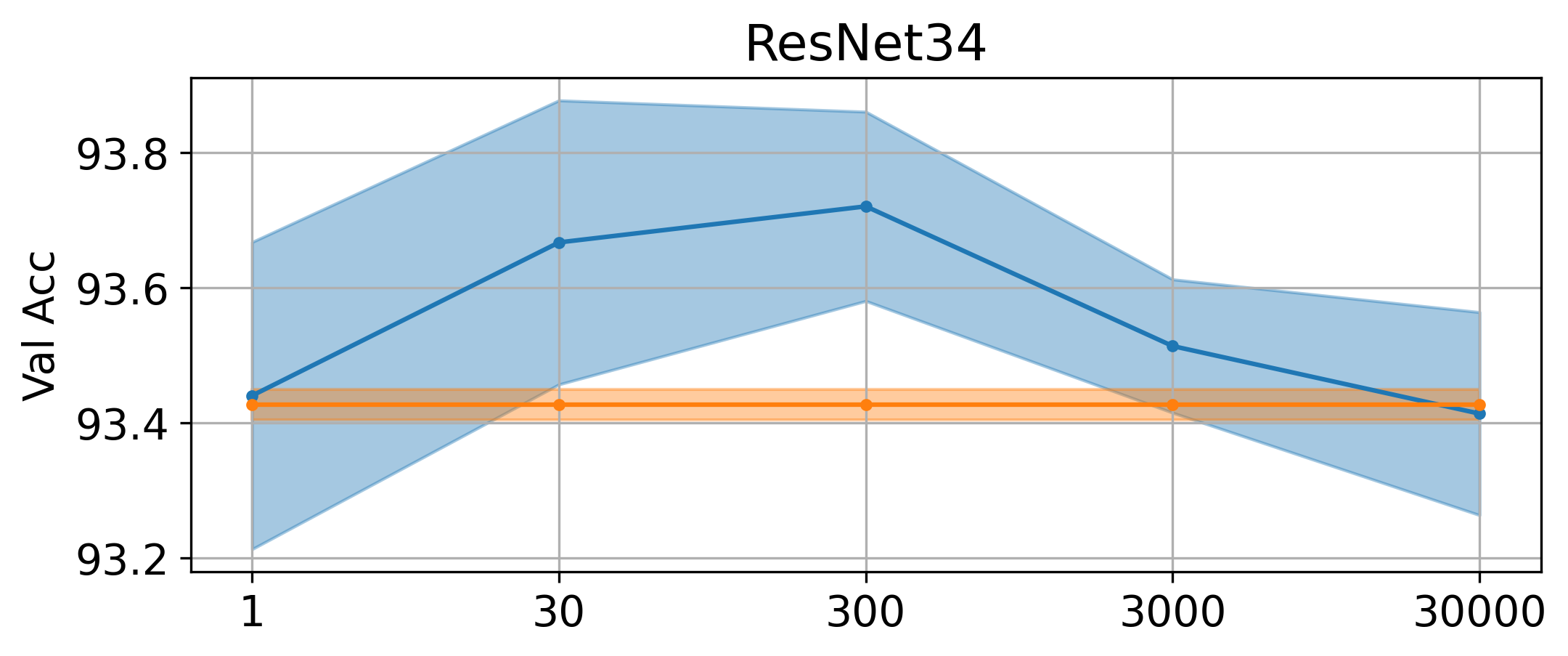}
  \caption{ResNet34}
  \label{app:subfigure:K_per for ResNet34}
  \end{subfigure}
   \caption{ We train and validate  Conv6, ResNet18 and ResNet34 on CIFAR-10 with various $K_\period \in \{1,30,300,3000,30000\}$.
   The x-axis is $K_\period$ in a log scale, and the y-axis is validation accuracy.
}
  \label{app:figure:Ablation Study on K_per}
\end{figure}

Also, we investigate the relationship between $K_\period$ and $r$. Figure \ref{app:figure:K_per and r} shows how test accuracy changes when both $K_\period$ and $r$ vary. From this result, we find that the accuracies seem to depend on $r / K_\period$. This may be because each pruned parameter in the neural network is randomized $N r / K_\period$ times in expectation during the optimization. On the other hand, when we use larger $r \in [0,1]$, we have to explore $K_\period$ in longer period (e.g. $3000$ iterations when $r=1.0$). Thus appropriately choosing $r$ leads to shrink the search space of $K_\period$.

\begin{figure}[htb]
    \begin{subfigure}{0.4\textwidth}
      \renewcommand\thesubfigure{\alph{subfigure}2}
      \centering
    \vspace{-1mm}
      \includegraphics[width=\textwidth]{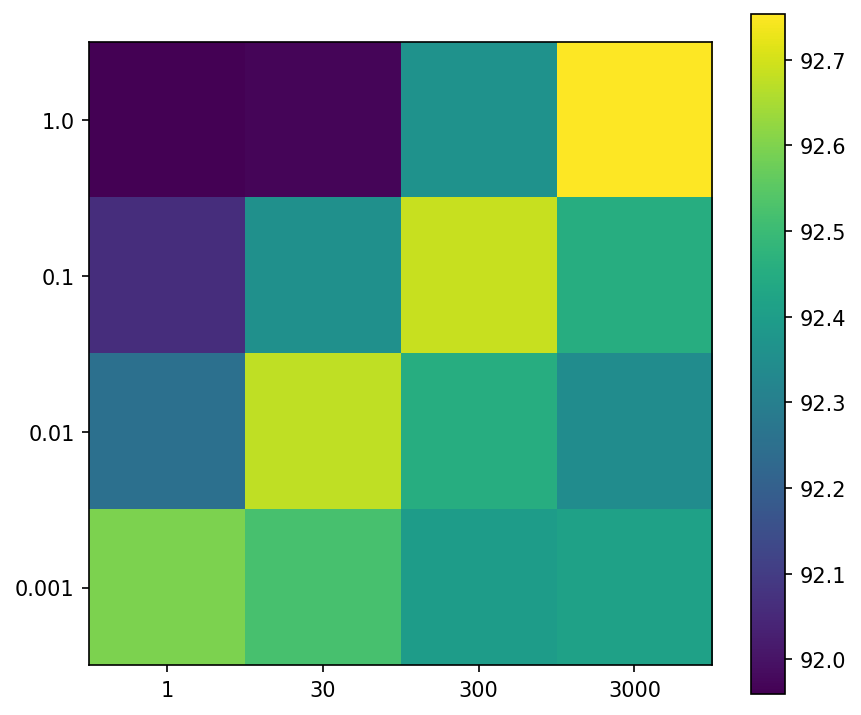}
    \end{subfigure}
  \centering
  \caption{Test accuracies on CIFAR-10 with ResNet18. The x-axis is $K_\period \in \{1,30,300,3000\}$ and the y-axis is $r\in\{0.001, 0.01, 0.1, 1.0\}$.}
  \label{app:figure:K_per and r}
\end{figure}

\subsection{Computational overhead of iterative randomization}

IteRand introduces additional computational cost to the base method, edge-popup, by iterative randomization. However, the additional computational cost is negligibly small in all our experiments. We measured the average overhead of a single randomizing operation, which is the only difference from edge-popup, as follows: $97.10$ ms for ResNet18 ($11.2$M params), $200.55$ ms for ResNet50 ($23.5$M params). Thus, the total additional cost should be about $10$ seconds in the whole training ($1.5$ hours) for ResNet18 on CIFAR-10 and $200\text{--}300$ seconds in the whole training (one week) for ResNet50 on ImageNet.

Also, we measured the total training times of our expriments for ResNet-18 on CIFAR-10 (Table \ref{app:table:actual training times}). The additional computational cost of IteRand over edge-popup is tens of seconds, which is quite consistent with the above estimates.

\begin{table}[H]
    \centering
    \caption{Training times for ResNet18 on CIFAR-10.}
    \label{app:table:actual training times}
    \begin{tabular}{lc}
        \toprule
        IteRand & $6253.69$ (secs) \\
        \midrule
        edge-popup & $6231.80$ (secs)  \\
        \midrule
        SGD & $6388.61$ (secs) \\
        \bottomrule
    \end{tabular}
\end{table}

\subsection{Experiments with varying sparsity}
Table \ref{app:table:varying sparsity} shows the comparison of IteRand and edge-popup when varying the sparsity parameter $p\in \{0.1, 0.3, 0.5, 0.7, 0.9, 0.95, 0.99\}$.
We can see that IteRand is effective for almost all the sparsities $p$.

\begin{table}[H]
  \caption{Test accuracies with various sparsities on CIFAR-10.}
  \label{app:table:varying sparsity}
  \centering
  \resizebox{0.99\textwidth}{!}{ 
      \begin{tabular}{lcccccccc}
        \toprule
          Networks & Methods 
          & $p=0.1$ & $p=0.3$ & $p=0.5$ & $p=0.7$
          & $p=0.9$ & $p=0.95$ & $p=0.99$
          \\
        \midrule
        \multirow{2}{*}{Conv6} & IteRand (SC)
        & $\mathbf{87.17} \pm 0.21$ & $\mathbf{89.08} \pm 0.14$ & $\mathbf{89.19} \pm 0.13$     
        & $\mathbf{87.67} \pm 0.07$ & $\mathbf{76.09} \pm 0.23$ & $\mathbf{59.03} \pm 1.55$
        & $12.04 \pm 3.40$
        \\
                               & edge-popup (SC)
        & $80.31 \pm 0.27$ & $86.55 \pm 0.22$ & $87.57 \pm 0.03$
        & $86.40 \pm 0.13$ & $73.25 \pm 0.79$ & $55.23 \pm 1.48$
        & $\mathbf{12.13} \pm 3.46$
          \\
        \midrule
        \multirow{2}{*}{ResNet18} & IteRand (SC)
        &  $\mathbf{91.79} \pm 0.20$   &   $\mathbf{92.67} \pm 0.11$ &  $\mathbf{92.66} \pm 0.22$
        &  $\mathbf{92.61} \pm 0.15$   &  $\mathbf{91.82} \pm 0.15$   &  $\mathbf{90.40} \pm 0.42$
        &  $\mathbf{76.31} \pm 2.56$
        \\
                                  & edge-popup (SC)
        & $87.37 \pm 0.18$ & $91.43 \pm 0.16$ & $92.25 \pm 0.18$ & $92.32 \pm 0.10$
        & $91.64 \pm 0.18$ & $90.28 \pm 0.30$ & $75.21 \pm 2.71$
        \\
        \bottomrule
      \end{tabular}
  }
\end{table}

Moreover, we compared the pruning-only approach (IteRand and edge-popup) and the iterative magnitude pruning (IMP) approach \cite{frankle2018lottery} with various sparsity rates. We employed the OpenLTH framework \cite{openlth}, which contains the implementation of IMP, as a codebase for this experiment and implemented both edge-popup and IteRand in this framework. The results are shown in Table \ref{app:table:comparison with imp}. Overall, the IMP outperforms the pruning-only methods. However, there is still room for improvement in the pruning-only approach such as introducing scheduled sparsities or an adaptive threshold, which is left to future work.

\begin{table}[H]
    \caption{Comparison of the pruning-only approach and magnituide-based one.}
    \label{app:table:comparison with imp}
    \centering
    \resizebox{0.99\textwidth}{!}{ 
        \begin{tabular}{lcccccc}
            \toprule
              Networks & Methods 
              & $p=0.5$ & $p=0.7$
              & $p=0.9$ & $p=0.95$ & $p=0.99$
              \\
            \midrule
            \multirow{3}{*}{VGG11} & IteRand (SC)
            & $88.46 \pm 0.22$ & $88.29 \pm 0.42$
            & $87.05 \pm 0.07$ & $84.37 \pm 0.59$
            & $64.93 \pm 4.81$
            \\
                                   & edge-popup (SC)
            & $87.09 \pm 0.31$ & $87.34 \pm 0.21$
            & $85.11 \pm 0.55$ & $81.06 \pm 0.84$
            & $61.71 \pm 6.05$
              \\
                                   & IMP with $3$ retraining \cite{frankle2018lottery}
            & $\mathbf{91.47} \pm 0.15$ & $\mathbf{91.48}  \pm 0.16$ 
            & $\mathbf{90.89}\pm 0.08$ & $\mathbf{90.39}  \pm 0.27$
            & $\mathbf{88.076}  \pm 0.17$
              \\
            \midrule
            \multirow{3}{*}{ResNet20} & IteRand (SC)
            & $84.17 \pm 0.78$ & $82.31 \pm 0.52$
            & $70.96 \pm 0.55$ & $55.60 \pm 0.70$
            & $24.45 \pm 0.67$
            \\
                                      & edge-popup (SC)
            & $76.57 \pm 0.91$ & $75.83 \pm 2.75$
            & $49.25 \pm 6.33$ & $42.96 \pm 3.91$
            & $20.11 \pm 3.43$
            \\
                                   & IMP with $3$ retraining  \cite{frankle2018lottery}
            & $\mathbf{90.70} \pm 0.37$ & $\mathbf{89.79} \pm 0.14$
            & $\mathbf{86.87} \pm 0.26$ & $\mathbf{84.28} \pm 0.08$
            & $\mathbf{71.78} \pm 1.66$
            \\
            \bottomrule
       \end{tabular}
    }
\end{table}

\subsection{Detailed empirical analysis on the parameter efficiency}
We conducted experiments to see how much more network width edge-popup requires than IteRand to achieve the same accuracy (Table \ref{app:detailed experiments on required width factor}). Here we use ResNet18 with various width factors $\rho \in \mathbb{R}_{> 0}$. We first computed the test accuracies of IteRand with $\rho=0.5, 1.0$ as target values. Next we explored the width factors for which edge-popup achieves the same accuracy as the target values. Table \ref{app:detailed experiments on required width factor} shows that edge-popup requires $1.3$ times wider networks than IteRand in this specific setting.

\begin{table}[h]
  \caption{Test accuracies for ResNet18 with various width factors.}
  \label{app:detailed experiments on required width factor}
  \centering
  \begin{tabular}{lcccc}
    \toprule
      &  $\rho=0.5$
      & $\rho=0.65$
      & $\rho=1.0$
      & $\rho=1.3$ \\
    \midrule
    IteRand (KU)
    &     $\mathbf{89.96}\pm 0.06$
    &     -
    &     $\mathbf{92.47}\pm 0.16$
    &     -
     \\
    \midrule
    edge-popup (KU)
    &     $88.45\pm 0.53$
    &     $\mathbf{89.99}\pm 0.08$
    &     $91.79\pm 0.19$
    &     $\mathbf{92.54}\pm 0.19$
    \\
    \bottomrule
  \end{tabular}
\end{table}

\subsection{Experiments with large-scale networks}

In addition to the experiments in Section~\ref{section:experiments}, we conducted experiments to see the effectiveness of IteRand with large-scale networks: WideResNet-50-2 \cite{zagoruyko2016wide} and ResNet50 with the width factor $\rho = 2.0$.
Table \ref{app:results for experiments with large-scale networks} shows that the iterative randomization is still effective for these networks to improve the performance of weight-pruning optimization.

\begin{table}[h]
  \caption{Experiments with large-scale networks.}
  \label{app:results for experiments with large-scale networks}
  \centering
  \begin{tabular}{lccc}
    \toprule
      &  IteRand (SC)
      & edge-popup (SC)
      & \# of parameters
      \\
    \midrule
    WideResNet-50-2
    &     $\mathbf{73.57}\%$
    &     $71.59\%$
    &     $68.8$ M
     \\
    \midrule
    ResNet50 ($\rho=2.0$)
    &     $\mathbf{74.05}\%$
    &     $72.96\%$
    &     $97.8$ M
    \\
    \bottomrule
  \end{tabular}
\end{table}

\subsection{Experiments with a text classification task}

Although our main theorem (Theorem 4.1) indicates that the effectiveness of IteRand does not depend on any specific tasks, we only presented the results on image classification datasets in the body of this paper.
In Table \ref{app:tab:results on IMDB}, we present experimental results on a text classification dataset, IMDB \cite{maas2011learning}, with recurrent neural networks (see Table \ref{app:table:architectures for imdb} for the network architectures).
For this experiment, we implemented both edge-popup and IteRand on the Jupyter notebook originally written by Trevett \cite{bentrevett2021pytorch}.
All models are trained for $15$ epochs and the learning rate $\eta$ we used is $\eta = 1.0$ for SGD and $\eta = 2.5$ for edge-popup and IteRand. Note that the learning rate $\eta = 2.5$ does not work well for SGD, thus we employed the different value from the one for edge-popup and IteRand. Also we set the hyperparameters for IteRand as $p=0.5$, $K_\period = \lfloor 270 / 6 \rfloor$ ($\approx$ $1/6$ epochs) and $r = 1.0$.
\begin{table}[H]
  \caption{Test accuracies on the IMDB dataset over $5$ runs.}
  \label{app:tab:results on IMDB}
  \centering
  \begin{tabular}{lccc}
    \toprule
    \diagbox{Networks}{Methods}
      & IteRand (SC)
      & edge-popup (SC)
      & SGD
      \\
    \midrule
    LSTM
    & $ \mathbf{88.44} \pm 0.28 \%$
    & $ 88.16 \pm 0.12 \%$
    & $ 87.39 \pm 0.37 \%$
    \\
    \midrule
    BiLSTM
    &   $\mathbf{88.51} \pm 0.24 \%$
    &   $88.34 \pm 0.19 \%$
    &   $87.62 \pm 0.22 \%$
    \\
    \bottomrule
  \end{tabular}
\end{table}
\begin{table}[H]
  \caption{The network architectures for IMDB.}
  \label{app:table:architectures for imdb}
  \centering
  \begin{tabular}{lc}
    \toprule
    \diagbox{Layer}{Network}
         &  (Bi)LSTM \\
    \midrule
    Embedding Layer
    & ${\rm dim } = 100$
    \\
    \midrule
    \multirow{2}{*}{LSTM Layer}
     & ${\rm hidden\_dim}=256, {\rm num\_layers}=1,$
     \\
     & (${\rm bidirectional}={\rm True}$ for BiLSTM)
     \\
    \midrule
    Dropout Layer
    & $p=0.2$
     \\
    \midrule
    Linear Layer
    & ${\rm output\_dim}=1$
    \\
    \midrule
    Output Layer
    & sigmoid
    \\
    \bottomrule
  \end{tabular}
\end{table}

\end{appendix}

\end{document}